\newcommand{\defequal}{ \stackrel{\rm def}{=}  }
\newcommand{\be}{
   \begin{equation}
                        }
\newcommand{\eb}{
   \end{equation}
                        }
\newcommand{\ba}{       \begin{eqnarray}        }
\newcommand{\ab}{       \end{eqnarray}          }
\newcommand{\figdep}[1]{}
\newcommand{\equref}[1]{\mbox{Equ. (\ref{#1})}}
\newcommand{\doFigure}[4]{
\ifnum\macInUse=1
  \begin{figure}[htbp]
    \begin{center}
      \vspace{.2in}
      \centerline {
        \epsfig{figure=#1,width=8cm}
      }
      \vspace{.2in}
      \caption{#2}
      \label{fig:#3}
    \end{center}
  \end{figure}
\else
\begin{figure}[htb]
  \centerline{\psfig{figure=#1.eps,width=15cm}}
\caption{#2}
  \label{fig:#3}
\end{figure}
\fi
}
\def\mN{ \mathcal{N} }
\def\mW{ \mathcal{W} }
\renewcommand{\equref}[1]{Eq. \ref{eq:#1}}
\newcommand\blfootnote[1]{%
  \begingroup
  \renewcommand\thefootnote{}\footnote{#1}%
  \addtocounter{footnote}{-1}%
  \endgroup
}
\def\Cov{{\rm Cov}}
\def\ba{ \textbf{a} }
\def\bI{ {\mathbf{I}} }
\def\bM{ {\mathbf{M}} }
\def\bT{ {\mathbf{T}} }
\def\bV{ {\mathbf{V}} }
\def\bepsilon{ {\mathbf{\epsilon}} }
\def\bmu{ \bm{\mu} }
\def\btheta{ \mathbf{\theta} }
\def\bTheta{ {\mathbf{\Theta}} }
\def\bSigma{ {\mathbf{\Sigma}} }
\def\nn{{ \parallel   }}
\def\RR{{ \mathbb{R}  }}
\def\PP{{ \mathbb{P}  }}
\def\EE{{ \mathbb{E}  }}
\def\NN{{ \mathbb{N}  }}
\def\tr{{ \text{tr}   }}
\def\be{{ \mathbf{e}  }}
\def\by{{ \mathbf{y}  }}
\def\bb{ \mathbf{b} }
\begin{document}
\title{Adaptive Low-Complexity Sequential Inference for Dirichlet Process Mixture Models}
\author{Theodoros Tsiligkaridis, Keith W. Forsythe \\
  MIT Lincoln Laboratory \\
  Lexington, MA 02141, USA\\
  \texttt{ttsili@ll.mit.edu}\\
	\texttt{forsythe@ll.mit.edu}}
\date{\today}
\maketitle

\blfootnote{This work is sponsored by the Assistant Secretary of Defense for Research \& Engineering under Air Force Contract \#FA8721-05-C-0002.  Opinions, interpretations, conclusions and recommendations are those of the author and are not necessarily endorsed by the United States Government.

}

\begin{abstract}
We develop a sequential low-complexity inference procedure for Dirichlet process mixtures of Gaussians for online clustering and parameter estimation when the number of clusters are unknown a-priori.  We present an easily computable, closed form parametric expression for the conditional likelihood, in which hyperparameters are recursively updated as a function of the streaming data assuming conjugate priors. Motivated by large-sample asymptotics, we propose a novel adaptive low-complexity design for the Dirichlet process concentration parameter and show that the number of classes grow at most at a logarithmic rate. 
We further prove that in the large-sample limit, the conditional likelihood and data predictive distribution become asymptotically Gaussian. We demonstrate through experiments on synthetic and real data sets that our approach is superior to other online state-of-the-art methods. 
\end{abstract}

\section{Introduction} \label{sec:intro}
Dirichlet process mixture models (DPMM) have been widely used for clustering data \cite{Neal:1992,Rasmussen:2000}. Traditional finite mixture models often suffer from overfitting or underfitting of data due to possible mismatch between the model complexity and amount of data. Thus, model selection or model averaging is required to find the correct number of clusters or the model with the appropriate complexity. This requires significant computation for high-dimensional data sets or large samples. Bayesian nonparametric modeling are alternative approaches to parametric modeling, an example being DPMM's which can automatically infer the number of clusters from the data via Bayesian inference techniques.

The use of Markov chain Monte Carlo (MCMC) methods for Dirichlet process mixtures has made inference tractable \cite{Neal:2000}. However, these methods can exhibit slow convergence and their convergence can be tough to detect. Alternatives include variational methods \cite{Blei:2006}, which are deterministic algorithms that convert inference to optimization. These approaches can take a significant computational effort even for moderate sized data sets. For large-scale data sets and low-latency applications with streaming data, there is a need for inference algorithms that are much faster and do not require multiple passes through the data. In this work, we focus on low-complexity algorithms that adapt to each sample as they arrive, making them highly scalable. An online algorithm for learning DPMM's based on a sequential variational approximation (SVA) was proposed in \cite{Lin:2013}, and the authors in \cite{Wang:2011} recently proposed a sequential maximum {\em a-posterior} (MAP) estimator for the class labels given streaming data. The algorithm is called sequential updating and greedy search (SUGS) and each iteration is composed of a greedy selection step and a posterior update step.

The choice of concentration parameter $\alpha$ is critical for DPMM's as it controls the number of clusters \cite{Antoniak:1974}. While most fast DPMM algorithms use a fixed $\alpha$ \cite{Fearnhead:2004, Daume:2007, Kurihara:2006}, imposing a prior distribution on $\alpha$ and sampling from it provides more flexibility, but this approach still heavily relies on experimentation and prior knowledge. Thus, many fast inference methods for Dirichlet process mixture models have been proposed that can adapt $\alpha$ to the data, including the works \cite{Escobar:1995} where learning of $\alpha$ is incorporated in the Gibbs sampling analysis, \cite{Blei:2006} where a Gamma prior is used in a conjugate manner directly in the variational inference algorithm. \cite{Wang:2011} also account for model uncertainty on the concentration parameter $\alpha$ in a Bayesian manner directly in the sequential inference procedure. This approach can be computationally expensive, as discretization of the domain of $\alpha$ is needed, and its stability highly depends on the initial distribution on $\alpha$ and on the range of values of $\alpha$. To the best of our knowledge, we are the first to analytically study the evolution and stability of the adapted sequence of $\alpha$'s in the online learning setting.

In this paper, we propose an adaptive non-Bayesian approach for adapting $\alpha$ motivated by large-sample asymptotics, and call the resulting algorithm ASUGS (Adaptive SUGS). While the basic idea behind ASUGS is directly related to the greedy approach of SUGS, the main contribution is a novel low-complexity stable method for choosing the concentration parameter adaptively as new data arrive, which greatly improves the clustering performance. We derive an upper bound on the number of classes, logarithmic in the number of samples, and further prove that the sequence of concentration parameters that results from this adaptive design is almost bounded. We finally prove, that the conditional likelihood, which is the primary tool used for Bayesian-based online clustering, is asymptotically Gaussian in the large-sample limit, implying that the clustering part of ASUGS asymptotically behaves as a Gaussian classifier. Experiments show that our method outperforms other state-of-the-art methods for online learning of DPMM's.

The paper is organized as follows. In Section \ref{sec:sugs}, we review the sequential inference framework for DPMM's that we will build upon, introduce notation and propose our adaptive modification. In Section \ref{sec:seq_inference}, the probabilistic data model is given and sequential inference steps are shown. Section \ref{sec:stability} contains the growth rate analysis of the number of classes and the adaptively-designed concentration parameters, and Section \ref{sec:steady_state} contains the Gaussian large-sample approximation to the conditional likelihood. Experimental results are shown in Section \ref{sec:simulation} and we conclude in Section \ref{sec:conclusion}.

\section{Sequential Inference Framework for DPMM} \label{sec:sugs}
Here, we review the SUGS framework of \cite{Wang:2011} for online clustering. Here, the nonparametric nature of the Dirichlet process manifests itself as modeling mixture models with countably infinite components. Let the observations be given by $\by_i \in \RR^d$, and $\gamma_i$ to denote the class label of the $i$th observation (a latent variable). We define the available information at time $i$ as $\by^{(i)}=\{\by_1,\dots,\by_i\}$ and $\gamma^{(i-1)}=\{\gamma_1,\dots,\gamma_{i-1}\}$. The online sequential updating and greedy search (SUGS) algorithm is summarized next for completeness. Set $\gamma_1=1$ and calculate $\pi(\theta_1|\by_1,\gamma_1)$. For $i\geq 2$,
	\begin{enumerate}
		\item Choose best class label for $\by_i$: 
			\begin{equation*}
				\gamma_i \in \arg \max_{1\leq h\leq k_{i-1}+1} P(\gamma_i=h|\by^{(i)},\gamma^{(i-1)}).
			\end{equation*}
		\item Update the posterior distribution using $\by_i,\gamma_i$:
			\begin{equation*}
				\pi(\btheta_{\gamma_i}|\by^{(i)},\gamma^{(i)}) \propto f(\by_i|\btheta_{\gamma_i}) \pi(\btheta_{\gamma_i}|\by^{(i-1)},\gamma^{(i-1)}).
			\end{equation*}
	\end{enumerate}
where $\theta_h$ are the parameters of class $h$, $f(\by_i|\btheta_h)$ is the observation density conditioned on class $h$ and $k_{i-1}$ is the number of classes created at time $i-1$. The algorithm sequentially allocates observations $\by_i$ to classes based on maximizing the conditional posterior probability.

To calculate the posterior probability $P(\gamma_i=h|\by^{(i)},\gamma^{(i-1)})$, define the variables:
\begin{equation*}
	L_{i,h}(\by_i) \stackrel{\text{def}}{=} P(\by_i|\gamma_i=h,\by^{(i-1)},\gamma^{(i-1)}), \qquad \pi_{i,h}(\alpha) \stackrel{\text{def}}{=} P(\gamma_i=h|\alpha,\by^{(i-1)},\gamma^{(i-1)})
\end{equation*}
From Bayes' rule, $P(\gamma_i=h|\by^{(i)},\gamma^{(i-1)}) \propto L_{i,h}(\by_i) \pi_{i,h}(\alpha)$ for $h=1,\dots,k_{i-1}+1$. Here, $\alpha$ is considered fixed at this iteration, and is not updated in a fully Bayesian manner.

%
According to the Dirichlet process prediction, the predictive probability of assigning observation $\by_i$ to a class $h$ is:
\begin{equation} \label{eq:conditional_priors}
	\pi_{i,h}(\alpha) = \left\{ \begin{matrix} \frac{m_{i-1}(h)}{i-1+\alpha}, & h=1,\dots,k_{i-1} \\ \frac{\alpha}{i-1+\alpha}, & h=k_{i-1}+1  \end{matrix}  \right.
\end{equation}
where $m_{i-1}(h) = \sum_{l=1}^{i-1}I(\gamma_l=h)$ counts the number of observations labeled as class $h$ at time $i-1$, and $\alpha>0$ is the concentration parameter.

\subsection{Adaptation of Concentration Parameter $\alpha$}
It is well known that the concentration parameter $\alpha$ has a strong influence on the growth of the number of classes \cite{Antoniak:1974}. Our experiments show that in this sequential framework, the choice of $\alpha$ is even more critical. Choosing a fixed $\alpha$ as in the online SVA algorithm of \cite{Lin:2013} requires cross-validation, which is computationally prohibitive for large-scale data sets. Furthermore, in the streaming data setting where no estimate on the data complexity exists, it is impractical to perform cross-validation. Although the parameter $\alpha$ is handled from a fully Bayesian treatment in \cite{Wang:2011}, a pre-specified grid of possible values $\alpha$ can take, say $\{\alpha_l\}_{l=1}^L$, along with the prior distribution over them, needs to be chosen in advance. Storage and updating of a matrix of size $(k_{i-1}+1)\times L$ and further marginalization is needed to compute $P(\gamma_i=h|\by^{(i)},\gamma^{(i-1)})$ at each iteration $i$.
Thus, we propose an alternative data-driven method for choosing $\alpha$ that works well in practice, is simple to compute and has theoretical guarantees.

The idea is to start with a prior distribution on $\alpha$ that favors small $\alpha$ and shape it into a posterior distribution using the data. Define $p_i(\alpha)=p(\alpha|\by^{(i)},\gamma^{(i)})$ as the posterior distribution formed at time $i$, which will be used in ASUGS at time $i+1$.  Let $p_1(\alpha) \equiv p_1(\alpha | \by^{(1)}, \gamma^{(1)})$ denote the prior for $\alpha$, e.g., an exponential distribution $p_1(\alpha) = \lambda e^{-\lambda \alpha}$. The dependence on $\by^{(i)}$ and $\gamma^{(i)}$ is trivial only at this first step. Then, by Bayes rule, $p_i(\alpha) \propto p(\by_i,\gamma_i|\by^{(i-1)},\gamma^{(i-1)},\alpha) p(\alpha|\by^{(i-1)},\gamma^{(i-1)}) \propto p_{i-1}(\alpha) \pi_{i,\gamma_i}(\alpha)$
where $\pi_{i,\gamma_i}(\alpha)$ is given in (\ref{eq:conditional_priors}). Once this update is made after the selection of $\gamma_i$, the $\alpha$ to be used in the next selection step is the mean of the distribution $p_i(\alpha)$, i.e., $\alpha_{i}=\EE[\alpha|\by^{(i)},\gamma^{(i)}]$. As will be shown in Section \ref{sec:steady_state}, the distribution $p_{i}(\alpha)$ can be approximated by a Gamma distribution with shape parameter $k_{i}$ and rate parameter $\lambda+\log i$. Under this approximation, we have $\alpha_{i} = \frac{k_{i}}{\lambda+\log i}$, only requiring storage and update of one scalar parameter $k_i$ at each iteration $i$.

The ASUGS algorithm is summarized in Algorithm \ref{alg:ASUGS}. The selection step may be implemented by sampling the probability mass function $\{q_h^{(i)}\}$. The posterior update step can be efficiently performed by updating the hyperparameters as a function of the streaming data for the case of conjugate distributions. Section \ref{sec:seq_inference} derives these updates for the case of multivariate Gaussian observations and conjugate priors for the parameters.

\begin{algorithm}[tb]
   \caption{Adaptive Sequential Updating and Greedy Search (ASUGS)}    \label{alg:ASUGS}
\begin{algorithmic}
   \STATE {\bfseries Input:} streaming data $\{\by_i\}_{i=1}^\infty$, rate parameter $\lambda>0$.
		\STATE Set $\gamma_1=1$ and $k_{1}=1$. Calculate $\pi(\theta_1|\by_1,\gamma_1)$.
		\FOR{$i\geq 2$:}
			\STATE (a) Update concentration parameter:
				\begin{equation*}
					\alpha_{i-1} = \frac{k_{i-1}}{\lambda + \log(i-1)}.
				\end{equation*}
			\STATE (b) Choose best label for $\by_i$:
				\begin{equation*}
					\gamma_i \sim \{q_h^{(i)}\} = \left\{ \frac{ L_{i,h}(\by_i) \pi_{i,h}(\alpha_{i-1})}{\sum_{h'} L_{i,h'}(\by_i) \pi_{i,h'}(\alpha_{i-1}) } \right\}.
				\end{equation*}
			\STATE (c) Update posterior distribution:
				\begin{equation*}
					\pi(\theta_{\gamma_i}|\by^{(i)},\gamma^{(i)}) \propto f(\by_i|\theta_{\gamma_i}) \pi(\theta_{\gamma_i}|\by^{(i-1)},\gamma^{(i-1)}).
				\end{equation*}
		\ENDFOR
\end{algorithmic}
\end{algorithm}

\section{Sequential Inference under Unknown Mean \& Unknown Covariance} \label{sec:seq_inference}
We consider the general case of an unknown mean and covariance for each class. The probabilistic model for the parameters of each class is given as:
\begin{equation} \label{eq:prob_model}
	\by_i|\bmu,\bT \sim \mN(\cdot|\bmu,\bT), \qquad \mu|\bT \sim \mN(\cdot|\bmu_0,c_o \bT), \qquad \bT \sim \mW(\cdot|\delta_0,\bV_0)
\end{equation}
where $\mN(\cdot|\bmu,\bT)$ denotes the multivariate normal distribution with mean $\bmu$ and precision matrix $\bT$, and $\mW(\cdot|\delta,\bV)$ is the Wishart distribution with $2\delta$ degrees of freedom and scale matrix $\bV$. The parameters $\btheta=(\bmu,\bT)\in \RR^d \times S_{++}^d$ follow a normal-Wishart joint distribution. The model (\ref{eq:prob_model}) leads to closed-form expressions for $L_{i,h}(\by_i)$'s due to conjugacy \cite{Tzikas:2008}. 

To calculate the class posteriors, the conditional likelihoods of $\by_i$ given assignment to class $h$ and the previous class assignments need to be calculated first. The conditional likelihood of $\by_i$ given assignment to class $h$ and the history $(\by^{(i-1)},\gamma^{(i-1)})$ is given by:
\begin{equation} \label{eq:L_ih}
	L_{i,h}(\by_i) = \int f(\by_i|\btheta_h) \pi(\btheta_h|\by^{(i-1)},\gamma^{(i-1)}) d\btheta_h
\end{equation}
Due to the conjugacy of the distributions, the posterior $\pi(\btheta_h|\by^{(i-1)},\gamma^{(i-1)})$ always has the form:
\begin{equation}
	\pi(\btheta_h|\by^{(i-1)},\gamma^{(i-1)}) = \mN(\bmu_h|\bmu_h^{(i-1)},c_h^{(i-1)}\bT_h) \nonumber \mW(\bT_h|\delta_h^{(i-1)},\bV_h^{(i-1)}) \label{eq:posterior_theta}
\end{equation}
where $\bmu_h^{(i-1)},c_h^{(i-1)},\delta_h^{(i-1)},\bV_h^{(i-1)}$ are hyperparameters that can be recursively computed as new samples come in. The form of this recursive computation of the hyperparameters is derived in Appendix A. For ease of interpretation and numerical stability, we define $\bSigma_h^{(i)} := \frac{(\bV_h^{(i)})^{-1}}{2\delta_h^{(i)}}$ as the inverse of the mean of the Wishart distribution $\mathcal{W}(\cdot|\delta_h^{(i)},\bV_h^{(i)})$. The matrix $\bSigma_h^{(i)}$ has the natural interpretation as the covariance matrix of class $h$ at iteration $i$. Once the $\gamma_i$th component is chosen, the parameter updates for the $\gamma_i$th class become:
\begin{align}
	\bmu_{\gamma_i}^{(i)}   &= \frac{1}{1+c_{\gamma_i}^{(i-1)}} \by_i + \frac{c_{\gamma_i}^{(i-1)}}{1+c_{\gamma_i}^{(i-1)}} \bmu_{\gamma_i}^{(i-1)}  \label{eq:param_update1} \\
	c_{\gamma_i}^{(i)}      &= c_{\gamma_i}^{(i-1)} + 1 		\label{eq:param_update2} \\
	\bSigma_{\gamma_i}^{(i)}   &= \frac{2\delta_{\gamma_i}^{(i-1)}}{1+2\delta_{\gamma_i}^{(i-1)}} \bSigma_{\gamma_i}^{(i-1)} + \frac{1}{1+2\delta_{\gamma_i}^{(i-1)}} \frac{c_{\gamma_i}^{(i-1)}}{1+c_{\gamma_i}^{(i-1)}} (\by_i-\bmu_{\gamma_i}^{(i-1)})(\by_i-\bmu_{\gamma_i}^{(i-1)})^T        \label{eq:param_update4} \\
	\delta_{\gamma_i}^{(i)} &= \delta_{\gamma_i}^{(i-1)} + \frac{1}{2} \label{eq:param_update3}
\end{align}
If the starting matrix $\bSigma_h^{(0)}$ is positive definite, then all the matrices $\{\bSigma_h^{(i)}\}$ will remain positive definite. Let us return to the calculation of the conditional likelihood (\ref{eq:L_ih}). By iterated integration, it follows that:
\begin{equation} \label{eq:Lih}
	L_{i,h}(\by_i)	\propto \left( \frac{r_h^{(i-1)}}{2\delta_h^{(i-1)}} \right)^{d/2} \frac{\rho_d(\delta_h^{(i-1)}) \det(\bSigma_h^{(i-1)})^{-1/2}}{\left( 1 + \frac{r_h^{(i-1)}}{2\delta_h^{(i-1)}} (\by_i-\bmu_h^{(i-1)})^T (\bSigma_h^{(i-1)})^{-1} (\by_i-\bmu_h^{(i-1)}) \right)^{\delta_h^{(i-1)}+\frac{1}{2}}}
\end{equation}
where $\rho_d(a)\defequal \frac{\Gamma(a+\frac{1}{2})}{\Gamma(a+\frac{1-d}{2})}$ and $r_h^{(i-1)} \defequal \frac{c_h^{(i-1)}}{1+c_h^{(i-1)}}$. A detailed mathematical derivation of this conditional likelihood is included in Appendix B. We remark that for the new class $h=k_{i-1}+1$, $L_{i,k_{i-1}+1}$ has the form (\ref{eq:Lih}) with the initial choice of hyperparameters $r^{(0)},\delta^{(0)},\bmu^{(0)},\bSigma^{(0)}$. 

\section{Growth Rate Analysis of Number of Classes \& Stability} \label{sec:stability}

\newcommand{\baralpha}{\alpha}

\newtheorem{defi}{Definition}
\newtheorem{lem}{Lemma}
\newtheorem{thm}{Theorem}
\newtheorem{cor}{Corollary}
\newcommand{\thmref}[1]{Thm. \ref{thm:#1}}

In this section, we derive a model for the posterior distribution $p_n(\alpha)$ using large-sample approximations, which will allow us to derive growth rates on the number of classes and the sequence of concentration parameters, showing that the number of classes grows as $\EE[k_n] = O(\log^{1+\epsilon} n)$ for $\epsilon$ arbitarily small under certain mild conditions.

The probability density of the $\alpha$ parameter is updated at the $j$th step in the following fashion:
\begin{equation*}   
  p_{j+1}(\alpha) \propto p_j(\alpha) \cdot \left\{
    \begin{array}{ll}
      \frac{\alpha}{j+\alpha} & \mbox{innovation class chosen} \\
      \frac{1}{j+\alpha} & \mbox{otherwise}
    \end{array} \right. ,
\end{equation*}
where only the $\alpha$-dependent factors in the update are shown. The $\alpha$-independent factors are absorbed by the normalization to a probability density.  Choosing the innovation class pushes mass toward infinity while choosing any other class pushes mass toward zero. Thus there is a possibility that the innovation probability grows in a undesired manner. We assess the growth of the number of innovations $r_n \defequal k_n-1$ under simple assumptions on some likelihood functions that appear naturally in the ASUGS algorithm.

Assuming that the initial distribution of $\alpha$ is $p_1(\alpha) = \lambda e^{-\lambda \alpha}$, the distribution used at step $n+1$ is proportional to $\alpha^{r_n} \prod_{j=1}^{n-1} (1+\frac{\alpha}{j})^{-1} e^{-\lambda \alpha}$. We make use of the limiting relation
\begin{thm} \label{thm:asymp}
The following asymptotic behavior holds:
	\begin{equation*}
		\lim_{n \rightarrow \infty} \frac{\log \prod_{j=1}^{n-1} (1+\frac{\alpha}{j})}{\alpha \log n} = 1.
	\end{equation*}
\end{thm}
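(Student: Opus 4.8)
The plan is to take the logarithm of the product, turning it into the sum $S_n(\alpha) \defequal \sum_{j=1}^{n-1} \log\!\left(1 + \frac{\alpha}{j}\right)$, and to show that $S_n(\alpha) = \alpha \log n + O(1)$ for each fixed $\alpha > 0$; the stated limit then follows immediately upon dividing by $\alpha \log n$. The natural comparison is the expansion $\log(1 + x) = x - \frac{x^2}{2} + O(x^3)$ for small $x$, which suggests writing $\log\!\left(1 + \frac{\alpha}{j}\right) = \frac{\alpha}{j} + \epsilon_j$, where $\epsilon_j \defequal \log\!\left(1 + \frac{\alpha}{j}\right) - \frac{\alpha}{j}$.

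First I would control the main term using the harmonic sum asymptotic $\sum_{j=1}^{n-1} \frac{1}{j} = \log n + O(1)$, so that $\alpha \sum_{j=1}^{n-1}\frac{1}{j} = \alpha \log n + O(1)$. Next I would show that the correction $\sum_{j=1}^{\infty} \epsilon_j$ converges absolutely: from the elementary bound $|\log(1+x) - x| \le \frac{x^2}{2}$ for $x \ge 0$ we get $|\epsilon_j| \le \frac{\alpha^2}{2 j^2}$, and $\sum_j j^{-2} < \infty$. Hence $\sum_{j=1}^{n-1}\epsilon_j$ converges to a finite constant $C(\alpha)$ and is in particular $O(1)$ in $n$. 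Combining the two pieces gives $S_n(\alpha) = \alpha \log n + O(1)$, whence $\frac{S_n(\alpha)}{\alpha \log n} = 1 + O\!\left(\frac{1}{\log n}\right) \to 1$.

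An equivalent and arguably cleaner route uses the closed form $\prod_{j=1}^{n-1}\left(1 + \frac{\alpha}{j}\right) = \frac{\Gamma(n+\alpha)}{\Gamma(1+\alpha)\,\Gamma(n)}$, obtained by telescoping $\prod_{j=1}^{n-1}\frac{j+\alpha}{j}$. The classical ratio asymptotic $\frac{\Gamma(n+\alpha)}{\Gamma(n)} \sim n^{\alpha}$, a direct consequence of Stirling's formula, then yields $\log \prod_{j=1}^{n-1}\left(1+\frac{\alpha}{j}\right) = \alpha \log n - \log \Gamma(1+\alpha) + o(1)$, again of the form $\alpha \log n + O(1)$, and the limit follows as before.

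Because $\alpha$ is held fixed while $n \to \infty$, no uniformity in $\alpha$ is required and all the $O(1)$ constants may depend on $\alpha$. The only technical inputs are the elementary inequality $|\log(1+x)-x|\le x^2/2$ and the convergence of $\sum j^{-2}$ (or, in the alternative, the Gamma-ratio asymptotic). Accordingly, the main and rather mild obstacle is simply the bookkeeping needed to confirm that the bounded terms are genuinely absorbed by the $\alpha \log n$ growth in the denominator; there is no deeper difficulty, and either approach delivers the result.
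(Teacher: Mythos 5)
Your primary argument is correct and is essentially the paper's own proof: both decompose $\log(1+\alpha/j)$ into the linear term $\alpha/j$, whose partial sums give $\alpha\log n + O(1)$ via the harmonic-number asymptotic, plus a remainder that is absolutely summable. The only difference in execution is that you control the remainder with the single inequality $|\log(1+x)-x|\le x^2/2$ for $x\ge 0$, valid for every $j\ge 1$, whereas the paper expands $\log(1+\alpha/k)$ as a full power series (valid only for $|\alpha/k|<1$, forcing it to discard the first $m-1$ factors with $m>|\alpha|+1$) and then bounds the entire tail $\sum_{l\ge 2}\frac{\alpha^l}{l}\sum_k k^{-l}$ term by term; your version is a little cleaner and avoids the truncation bookkeeping. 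Your alternative route via $\prod_{j=1}^{n-1}(1+\tfrac{\alpha}{j})=\frac{\Gamma(n+\alpha)}{\Gamma(1+\alpha)\Gamma(n)}$ and the ratio asymptotic $\Gamma(n+\alpha)/\Gamma(n)\sim n^{\alpha}$ is genuinely different from the paper and is also correct; it buys a sharper conclusion (the additive constant is identified as $-\log\Gamma(1+\alpha)$, with $o(1)$ error rather than merely $O(1)$) at the cost of invoking Stirling's formula rather than purely elementary estimates. Either way the limit follows, and there is no gap.
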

\begin{proof}
	See Appendix C.
\end{proof}
Using Theorem \ref{thm:asymp}, a large-sample model for $p_{n}(\alpha)$ is $\alpha^{r_n} e^{-(\lambda + \log n)\alpha}$, suitably normalized. Recognizing this as the Gamma distribution with shape parameter $r_n+1$ and rate parameter $\lambda+\log n$, its mean is given by $\baralpha_n = \frac{r_n+1}{\lambda+\log n}$. We use the mean in this form to choose class membership in Alg. \ref{alg:ASUGS}. This asymptotic approximation leads to a very simple scalar update of the concentration parameter; there is no need for discretization for tracking the evolution of continuous probability distributions on $\alpha$. In our experiments, this approximation is very accurate.

Recall that the innovation class is labeled $K_+ = k_{n-1}+1$ at the $n^{th}$ step. The modeled updates randomly select a previous class or innovation (new class) by sampling from the probability distribution $\{q_k^{(n)} = P(\gamma_n=k|\by^{(n)}, \gamma^{(n-1)})\}_{k=1}^{K_+}$. Note that $n-1 = \sum_{k \neq K_+} m_n(k)$ , where $m_n(k)$ represents the number of members in class $k$ at time $n$.  

We assume the data follows the Gaussian mixture distribution:
\begin{equation} \label{eq:p_true}
	p_T(\by) \defequal \sum_{h=1}^K \pi_h \mathcal{N}(\by|\bmu_h,\bSigma_h)
\end{equation}
where $\pi_h$ are the prior probabilities, and $\bmu_h,\bSigma_h$ are the parameters of the Gaussian clusters.

Define the mixture-model probability density function, which plays the role of the predictive distribution:
\begin{equation}   \label{eq:mixModel}
  \tilde{L}_{n,K_+}(\by) \defequal \sum_{k \neq K_+} \frac{m_{n-1}(k)}{n-1} L_{n,k}(\by),
\end{equation}
so that the probabilities of choosing a previous class or an innovation (using \equref{conditional_priors}) are proportional to $\sum_{k\neq K_+} \frac{m_{n-1}(k)}{n-1+\alpha_{n-1}} L_{n,k}(\by_n) = \frac{(n-1)}{n-1+\baralpha_{n-1}} \tilde{L}_{n,K_+}(\by_n)$ and $\frac{\alpha_{n-1}}{n-1+\baralpha_{n-1}} L_{n,K_+}(\by_n)$, respectively. If $\tau_{n-1}$ denotes the innovation probability at step $n$, then we have
\begin{equation}   
  \left( \rho_{n-1} \frac{\alpha_{n-1} L_{n, K_+}(\by_n) }{n-1+\alpha_{n-1}}, \rho_{n-1} \frac{(n-1) \tilde{L}_{n, K_+}(\by_n) }{n-1+\alpha_{n-1}} \right) = ( \tau_{n-1}, 1-\tau_{n-1}) \label{eq:classprobs}
\end{equation}
for some positive proportionality factor $\rho_{n-1}$.  

Define the likelihood ratio (LR) at the beginning of stage $n$ as \footnote{Here, $L_0(\cdot) \defequal L_{n,K+}(\cdot)$ is independent of $n$ and only depends on the initial choice of hyperparameters as discussed in Sec. \ref{sec:seq_inference}.}:
\begin{equation} \label{def:LR}
	l_n(\by) \defequal \frac{ L_{n,K_+}(\by)}{ \tilde{L}_{n,K_+}(\by)}
\end{equation}
Conceptually, the mixture (\ref{eq:mixModel}) represents a modeled distribution fitting the currently observed data. If all ``modes'' of the data have been observed, it is reasonable to expect that $\tilde{L}_{n,K_+}$ is a good model for future observations. The LR $l_n(\by_n)$ is not large when the future observations are well-modeled by (\ref{eq:mixModel}). In fact, we expect $\tilde{L}_{n,K+} \to p_T$ as $n\to\infty$, as discussed in Section \ref{sec:steady_state}.

\begin{lem}
The following bound holds:
\begin{equation*}
	\tau_{n-1} = \frac{l_n(\by_n) \alpha_{n-1}}{n-1 + l_n(\by_n) \alpha_{n-1}} \leq \min\left(\frac{l_n(\by_n) \alpha_{n-1}}{n-1},1\right).
\end{equation*}
\end{lem}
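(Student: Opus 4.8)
The plan is to first turn the proportionality statement \equref{classprobs} into a closed form for $\tau_{n-1}$, and then obtain the stated bound from the elementary monotonicity of the map $t \mapsto t/(m+t)$.

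First I would exploit the fact that $(\tau_{n-1},\,1-\tau_{n-1})$ in \equref{classprobs} is a genuine probability vector summing to one. Adding the two coordinates and setting the sum equal to one determines the proportionality factor,
\[
	\rho_{n-1} = \frac{n-1+\alpha_{n-1}}{\alpha_{n-1} L_{n,K_+}(\by_n) + (n-1)\tilde{L}_{n,K_+}(\by_n)}.
\]
Substituting this back into the first coordinate, the common factor $(n-1+\alpha_{n-1})^{-1}$ cancels and leaves
\[
	\tau_{n-1} = \frac{\alpha_{n-1} L_{n,K_+}(\by_n)}{\alpha_{n-1} L_{n,K_+}(\by_n) + (n-1)\tilde{L}_{n,K_+}(\by_n)}.
\]
Dividing numerator and denominator by $\tilde{L}_{n,K_+}(\by_n)>0$ and recalling the definition of the likelihood ratio $l_n$ then yields the claimed equality $\tau_{n-1} = l_n(\by_n)\alpha_{n-1}/\bigl(n-1 + l_n(\by_n)\alpha_{n-1}\bigr)$.

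For the inequality I would set $x \defequal l_n(\by_n)\alpha_{n-1}$ and $m \defequal n-1$, noting that $x \geq 0$ (since $l_n$ is a ratio of nonnegative densities and $\alpha_{n-1}>0$) and $m>0$. The claim reduces to showing $x/(m+x) \leq \min(x/m,\,1)$, and both parts are immediate from the behavior of $g(t)=t/(m+t)$ on $[0,\infty)$: since $m+x \geq m$ we get $x/(m+x) \leq x/m$, and since $x \leq m+x$ we get $x/(m+x) \leq 1$.

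The argument is essentially a one-line calculation once the normalization is disposed of; the only step requiring any care is the correct elimination of $\rho_{n-1}$ from \equref{classprobs}, in particular verifying that the denominators are strictly positive so that the divisions are legitimate, after which the bound follows from monotonicity with no further work.
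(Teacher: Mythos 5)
Your proof is correct and is exactly the ``simple calculation'' the paper's proof alludes to: normalizing \equref{classprobs} to eliminate $\rho_{n-1}$, dividing through by $\tilde{L}_{n,K_+}(\by_n)$ to expose the likelihood ratio, and bounding $x/(m+x)$ by $\min(x/m,1)$. No differences in approach; your write-up simply makes explicit the steps the paper leaves to the reader.
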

\begin{proof}
  The result follows directly from (\ref{eq:classprobs}) after a simple calculation.
\end{proof}

The innovation random variable $r_n$ is described by the random process associated with the probabilities of transition
\begin{equation}   \label{eq:transitionProb}
  P(r_{n+1}=k|r_n) = \left\{
    \begin{array}{ll}
      \tau_n, & k = r_n+1  \\
      1 - \tau_n,  & k = r_n
    \end{array} \right. .
\end{equation}
The expectation of $r_n$ is majorized by the expectation of a similar random process, $\bar{r}_n$, based on the transition probability $\sigma_n \defequal \min(\frac{r_n+1}{a_n},1)$ instead of $\tau_n$ as Appendix D shows, where the random sequence $\{a_n\}$ is given by $l_{n+1}(\by_{n+1})^{-1} n(\lambda+ \log n)$.  The latter can be described as a modification of a Polya urn process with selection probability $\sigma_n$.  
The asymptotic behavior of $r_n$ and related variables is described in the following theorem.

\begin{thm} \label{thm:stabilitythm}
  Let $\tau_n$ be a sequence of real-valued random variables $0 \leq \tau_n \leq 1$ satisfying $\tau_n \leq \frac{r_n+1}{a_n}$ for $n \geq N$,  where $a_n  = l_{n+1}(\by_{n+1})^{-1} n(\lambda+\log n)$, and where the nonnegative, integer-valued random variables $r_n$ evolve according to (\ref{eq:transitionProb}). Assume the following for $n\geq N$:
	\begin{enumerate}
		\item $l_n(\by_n) \leq \zeta$ \quad (a.s.)
		\item $D(p_T \parallel \tilde{L}_{n,K+} ) \leq \delta$ \quad (a.s.)
	\end{enumerate}	
	where $D(p \parallel q)$ is the Kullback-Leibler divergence between distributions $p(\cdot)$ and $q(\cdot)$. Then, as $n\to\infty$,
  \begin{equation} \label{eq:alpha_bnd}
		r_n = O_P(\log^{1+\zeta \sqrt{\delta/2}} n), \qquad \alpha_n = O_P(\log^{\zeta \sqrt{\delta/2}} n) 
  \end{equation}
\end{thm}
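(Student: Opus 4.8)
The plan is to analyze directly the abstract process specified in the hypotheses (which, by the Lemma and Appendix~D, majorizes the true innovation count), to control the growth of $s_n \defequal r_n+1$ in expectation, and then to convert a power-of-$\log$ moment bound into the stated $O_P$ rates via Markov's inequality. The exponent $1+\zeta\sqrt{\delta/2}$ should emerge as ``$1$'' (the total mass of the new-class predictive density) plus an excess controlled by Pinsker's inequality.

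First I would write the one-step recursion. Since $r_n$ increases by one with probability $\tau_n$ and is otherwise fixed, conditioning on the natural filtration $\mathcal{F}_n$ gives $\EE[s_{n+1}\mid\mathcal{F}_n] = s_n + \tau_n$. Inserting the hypothesis $\tau_n \le (r_n+1)/a_n$ with $a_n = l_{n+1}(\by_{n+1})^{-1} n(\lambda+\log n)$ yields $\EE[s_{n+1}\mid\mathcal{F}_n] \le s_n + \tfrac{s_n\, l_{n+1}(\by_{n+1})}{n(\lambda+\log n)}$. Because $\by_{n+1}\sim p_T$ is independent of $\mathcal{F}_n$ while $s_n$ and $\tilde L_{n+1,K_+}$ are $\mathcal{F}_n$-measurable, integrating out $\by_{n+1}$ gives $\EE[s_{n+1}\mid\mathcal{F}_n]\le s_n\bigl(1+\tfrac{1}{n(\lambda+\log n)}\,\EE_{p_T}[l_{n+1}]\bigr)$, where $\EE_{p_T}[l_{n+1}]$ is an $\mathcal{F}_n$-measurable quantity.

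The crux is bounding $\EE_{p_T}[l_{n+1}] = \int p_T(\by)\,\frac{L_0(\by)}{\tilde L_{n+1,K_+}(\by)}\,d\by$. Writing $p_T = \tilde L_{n+1,K_+} + (p_T-\tilde L_{n+1,K_+})$ and using $\int L_0 = 1$, the first piece contributes exactly $1$ and the remainder equals $\int(p_T-\tilde L_{n+1,K_+})\,l_{n+1}$. On the set $\{p_T>\tilde L_{n+1,K_+}\}$ I bound $l_{n+1}\le\zeta$ by the a.s. hypothesis and discard the nonpositive complementary part, so the remainder is at most $\zeta\,\|p_T-\tilde L_{n+1,K_+}\|_{\mathrm{TV}}$; Pinsker's inequality with the a.s. bound $D(p_T\|\tilde L_{n+1,K_+})\le\delta$ then gives $\|p_T-\tilde L_{n+1,K_+}\|_{\mathrm{TV}}\le\sqrt{\delta/2}$. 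Hence $\EE_{p_T}[l_{n+1}]\le 1+\zeta\sqrt{\delta/2}\defequal c$ a.s., and taking full expectations (the factor being deterministic) gives $\EE[s_{n+1}]\le \EE[s_n]\bigl(1+\tfrac{c}{n(\lambda+\log n)}\bigr)$ for $n\ge N$. Telescoping from $N$, and noting $\EE[s_N]\le N+1<\infty$ since $r_n$ grows by at most one per step, I get $\log\EE[s_n]\le c\sum_{j=N}^{n-1}\tfrac{1}{j(\lambda+\log j)}+O(1)$; comparison with $\int \tfrac{dx}{x(\lambda+\log x)}=\log(\lambda+\log x)$ shows the sum is $\sim\log\log n$, so $\EE[s_n]=O(\log^{1+\zeta\sqrt{\delta/2}}n)$. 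Markov's inequality then yields $s_n=O_P(\log^{1+\zeta\sqrt{\delta/2}}n)$, whence $r_n=s_n-1=O_P(\log^{1+\zeta\sqrt{\delta/2}}n)$, and since $\alpha_n=\frac{r_n+1}{\lambda+\log n}=\frac{s_n}{\lambda+\log n}$ we obtain $\alpha_n=O_P(\log^{\zeta\sqrt{\delta/2}}n)$.

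I expect the main obstacle to be the estimate of $\EE_{p_T}[l_{n+1}]$: one must recognize that the ``$1$'' in the exponent is precisely the mass $\int L_0=1$ and that the excess is extracted with the correct constant, which requires the \emph{one-sided} total-variation bound (keeping only the region $p_T>\tilde L_{n+1,K_+}$) rather than the naive $\int|p_T-\tilde L_{n+1,K_+}|$; this is what makes the stated exponent $\zeta\sqrt{\delta/2}$ sharp instead of doubled. A secondary technical point is the measurability bookkeeping that lets $\EE_{p_T}[l_{n+1}]$ be pulled out as a deterministic bound, and the harmless fact that the hypotheses hold only for $n\ge N$, the initial segment contributing a finite multiplicative constant.
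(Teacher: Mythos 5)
Your proof is correct and reaches the stated bounds, but it diverges from the paper's Appendix E at the one step that actually produces the exponent. The skeleton is shared: the conditional-expectation recursion for $s_n = r_n+1$, the observation that $\by_{n+1}$ is independent of the $\mathcal{F}_n$-measurable quantities $s_n$ and $\tilde L_{n+1,K_+}$, telescoping against $\sum_k \frac{1}{k\log k}\sim\log\log n$ (the paper's Appendix F lemma), and Markov's inequality at the end. Two things differ. First, you bypass the Polya-urn majorization of Appendix D by inserting $\tau_n\le s_n/a_n$ directly into $\EE[s_{n+1}\mid\mathcal{F}_n]=s_n+\EE[\tau_n\mid\mathcal{F}_n]$; this is a legitimate streamlining of the paper's detour through the dominating process $\bar r_n$. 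Second, and more substantively, the paper obtains $\EE[l_k(\by_k)\mid\mathcal{F}_{k-1}]\le 1+\zeta\sqrt{\delta/2}$ via convex duality (the Donsker--Varadhan bound $\EE_{p_T}[l]\le 1+\tfrac{1}{s}D+\tfrac{1}{s}\log\EE_{\tilde L}[e^{s(l-1)}]$), Hoeffding's lemma for the $[0,\zeta]$-bounded variable $l$ with $\EE_{\tilde L}[l]=1$, and optimization over $s$; you instead decompose $\EE_{p_T}[l]=\int L_0+\int(p_T-\tilde L)\,l$, keep only the region $\{p_T>\tilde L\}$, and apply Pinsker. These are two standard proofs of the same bounded-function transportation inequality and yield the identical constant; yours is more elementary and makes visible that the leading $1$ in the exponent is $\int L_0=1$, while the paper's duality route would generalize to unbounded $l$ with sub-Gaussian control. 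Your remark about the one-sided total-variation bound is on point: the naive bound $\zeta\int\lvert p_T-\tilde L\rvert$ would give exponent $\zeta\sqrt{2\delta}$ rather than $\zeta\sqrt{\delta/2}$. One presentational nit: $\tau_n$ depends on $\by_{n+1}$ and is therefore not $\mathcal{F}_n$-measurable, so your identity $\EE[s_{n+1}\mid\mathcal{F}_n]=s_n+\tau_n$ should be stated as conditioning on $\sigma(\mathcal{F}_n,\by_{n+1})$ followed by the tower property --- which is exactly what your subsequent ``integrating out $\by_{n+1}$'' step performs, so nothing is broken.
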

\begin{proof}
	See Appendix E.
\end{proof}

Theorem \ref{thm:stabilitythm} bounds the growth rate of the mean of the number of class innovations and the concentration parameter $\alpha_n$ in terms of the sample size $n$ and parameter $\zeta$. The bounded LR and bounded KL divergence conditions of Thm. \ref{thm:stabilitythm} manifest themselves in the rate exponents of (\ref{eq:alpha_bnd}). 
The experiments section shows that both of the conditions of Thm. \ref{thm:stabilitythm} hold for all iterations $n\geq N$ for some $N\in \NN$. In fact, assuming the correct clustering, the mixture distribution $\tilde{L}_{n,k_{n-1}+1}$ converges to the true mixture distribution $p_T$, implying that the number of class innovations grows at most as $O(\log^{1+\epsilon} n)$ and the sequence of concentration parameters is $O(\log^\epsilon n)$, where $\epsilon>0$ can be arbitrarily small.

\section{Asymptotic Normality of Conditional Likelihood} \label{sec:steady_state}
In this section, we derive an asymptotic expression for the conditional likelihood (\ref{eq:Lih}) in order to gain insight into the steady-state of the algorithm.

We let $\pi_h$ denote the true prior probability of class $h$. Using the bounds of the Gamma function in Theorem 1.6 from \cite{Batir:2008}, it follows that $\lim_{a\to\infty} \frac{\rho_d(a)}{e^{-d/2} (a-1/2)^{d/2}} = 1$. Under normal convergence conditions of the algorithm (with the pruning and merging steps included), all classes $h=1,\dots,K$ will be correctly identified and populated with approximately $n_{i-1}(h) \approx \pi_h (i-1)$ observations at time $i-1$. Thus, the conditional class prior for each class $h$ converges to $\pi_h$ as $i\to\infty$, in virtue of (\ref{eq:alpha_bnd}), $\pi_{i,h}(\alpha_{i-1}) = \frac{n_{i-1}(h)}{i-1+\alpha_{i-1}} = \frac{\pi_h}{1+\frac{O_P(\log^{\zeta \sqrt{\delta/2}}(i-1))}{i-1}} \stackrel{i\to\infty}{\longrightarrow} \pi_h$. 
According to (\ref{eq:param_update2}), we expect $r_h^{(i-1)} \to 1$ as $i\to\infty$ since $c_h^{(i-1)}\sim \pi_h (i-1)$. Also, we expect $2\delta_h^{(i-1)} \sim \pi_h (i-1)$ as $i\to\infty$ according to (\ref{eq:param_update3}). Also, from before, $\rho_d(\delta_h^{(i-1)}) \sim e^{-d/2} (\delta_h^{(i-1)}-1/2)^{d/2} \sim e^{-d/2} (\pi_h \frac{i-1}{2}-\frac{1}{2})^{d/2}$. The parameter updates (\ref{eq:param_update1})-(\ref{eq:param_update3}) imply $\bmu_h^{(i)} \to \bmu_h$ and $\bSigma_h^{(i)}\to \bSigma_h$ as $i\to\infty$. This follows from the strong law of large numbers, as the updates are recursive implementations of the sample mean and sample covariance matrix. Thus, the large-sample approximation to the conditional likelihood becomes:
\begin{align}
	L_{i,h}(\by_i) &\stackrel{i\to\infty}{\propto} \frac{\lim_{i\to\infty} \left(1 + \frac{\pi_h^{-1}}{i-1} (\by_i-\bmu_h^{(i-1)})^T (\bSigma_h^{(i-1)})^{-1} (\by_i-\bmu_h^{(i-1)}) \right)^{-\frac{i-1}{2\pi_h^{-1}}}}{\lim_{i\to\infty} \det(\bSigma_h^{(i-1)})^{1/2}} \nonumber \\
	&\stackrel{i\to\infty}{\propto} \frac{e^{-\frac{1}{2} (\by_i-\bmu_h)^T \bSigma_h^{-1} (\by_i-\bmu_h)}}{\sqrt{\det \bSigma_h}}  \label{eq:Gauss_rule}
\end{align}
where we used $\lim_{u\to\infty} (1+\frac{c}{u})^u = e^c$. The conditional likelihood (\ref{eq:Gauss_rule}) corresponds to the multivariate Gaussian distribution with mean $\bmu_h$ and covariance matrix $\bSigma_h$. A similar asymptotic normality result was recently obtained in \cite{Tsiligkaridis:MLSP:2015} for Gaussian observations with a von Mises prior. The asymptotics $\frac{m_{n-1}(h)}{n-1} \to \pi_h$, $\bmu_h^{(n)}\to\bmu_h, \bSigma_h^{(n)} \to \bSigma_h$, $L_{n,h}(\by)\to \mathcal{N}(\by|\bmu_h,\bSigma_h)$ as $n\to\infty$ imply that the mixture distribution $\tilde{L}_{n,K+}$ in (\ref{eq:mixModel}) converges to the true Gaussian mixture distribution $p_T$ of (\ref{eq:p_true}). Thus, for any small $\delta$, we expect $D(p_T \parallel \tilde{L}_{n,K+})\leq \delta$ for all $n\geq N$, validating the assumption of Theorem \ref{thm:stabilitythm}.

\subsection{Prune \& Merge}
It is possible that multiple clusters are similar and classes might be created due to outliers, or due to the particular ordering of the streaming data sequence, as also noted in \cite{Lin:2013}. These effects can be mitigated by adding a pruning and merging step in the ASUGS algorithm.

The pruning step may be implemented as follows. Define $w_{h}^{(i)} \stackrel{\text{def}}{=} \sum_{j=1}^i q_{h}^{(j)}$, i.e., the running sum of the posterior weights. The relative weight of each component at the $i$th iteration may be computed as $\tilde{w}_h^{(i)} = \frac{w_h^{(i)}}{\sum_{k}w_k^{(i)}}$. If $\tilde{w}_h^{(i)} < \epsilon_r$, then the component is removed.

The merging can be implemented by merging two clusters $k_1$ and $k_2$, once the $\ell_1$ distance between the posteriors over time falls below a threshold $\epsilon_d$. This distance is measured as $d_q(k_1,k_2)=\frac{1}{i} \sum_{j=1}^i |q_{k_1}^{(j)}-q_{k_2}^{(j)}|$. This criterion can be implemented in an online fashion by implementing the distance computation recursively. The sufficient statistics are also merged by taking convex combinations $\mu_{k_1}^{(i)} \leftarrow \alpha^{(i)} \mu_{k_1}^{(i)} + (1-\alpha^{(i)})\mu_{k_2}^{(i)}$ and $\bSigma_{k_1}^{(i)} \leftarrow \alpha^{(i)}\bSigma_{k_1}^{(i)}+(1-\alpha^{(i)})\bSigma_{k_2}^{(i)}$, and by adding $c_{k_1}^{(i)} \leftarrow c_{k_1}^{(i)} + c_{k_2}^{(i)}$ and $\delta_{k_1}^{(i)} \leftarrow \delta_{k_1}^{(i)} + \delta_{k_2}^{(i)}$.

\section{Experiments} \label{sec:simulation}
We apply the ASUGS learning algorithm to a synthetic 16-class example and to a real data set, to verify the stability and accuracy of our method. The experiments show the value of adaptation of the Dirichlet concentration parameter for online clustering and parameter estimation.

Since it is possible that multiple clusters are similar and classes might be created due to outliers, or due to the particular ordering of the streaming data sequence, we add the pruning and merging step in the ASUGS algorithm as done in \cite{Lin:2013}. We compare ASUGS and ASUGS-PM with SUGS, SUGS-PM, SVA and SVA-PM proposed in \cite{Lin:2013}, since it was shown in \cite{Lin:2013} that SVA and SVA-PM outperform the block-based methods that perform iterative updates over the entire data set including Collapsed Gibbs Sampling, MCMC with Split-Merge and Truncation-Free Variational Inference.  

\subsection{Synthetic Data set}
We consider learning the parameters of a 16-class Gaussian mixture each with equal variance of $\sigma^2=0.025$. The training set was made up of $500$ iid samples, and the test set was made up of $1000$ iid samples. The clustering results are shown in Fig. \ref{fig:16qam}(a), showing that the ASUGS-based approaches are more stable than SVA-based algorithms. ASUGS-PM performs best and identifies the correct number of clusters, and their parameters. Fig. \ref{fig:16qam}(b) shows the data log-likelihood on the test set (averaged over $100$ Monte Carlo trials), the mean and variance of the number of classes at each iteration. The ASUGS-based approaches achieve a higher log-likelihood than SVA-based approaches asymptotically. Fig. \ref{fig:asugs_stats_16qam} provides some numerical verification for the assumptions of Theorem \ref{thm:stabilitythm}. As expected, the predictive likelihood $\tilde{L}_{i,K+}$ (\ref{eq:mixModel}) converges to the true mixture distribution $p_T$ (\ref{eq:p_true}), and the likelihood ratio $l_i(\by_i)$ is bounded after enough samples are processed.
\begin{figure}[ht]
	\centering{
	\subfigure[]{
		\centering
		\includegraphics[width=0.80\textwidth]{./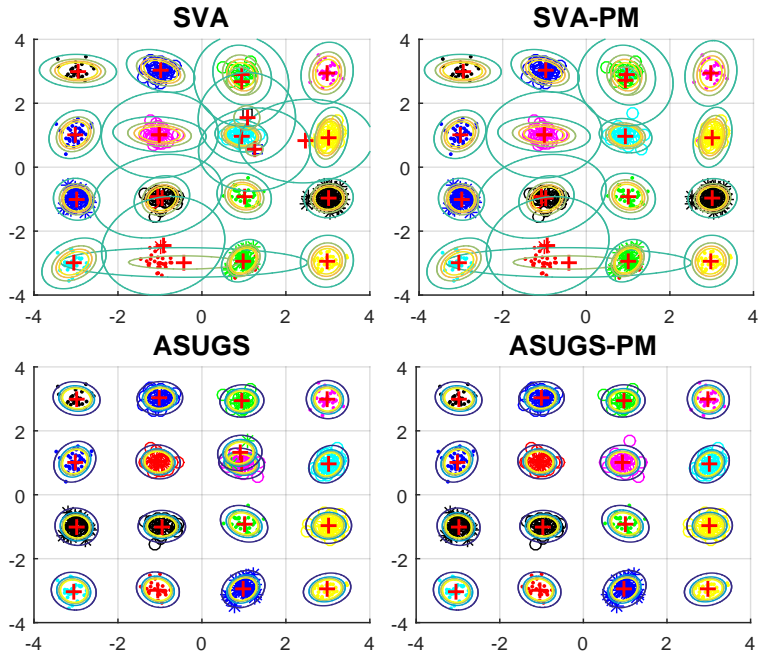}
	}
	\subfigure[]{
		\centering
		\includegraphics[width=1.00\textwidth]{./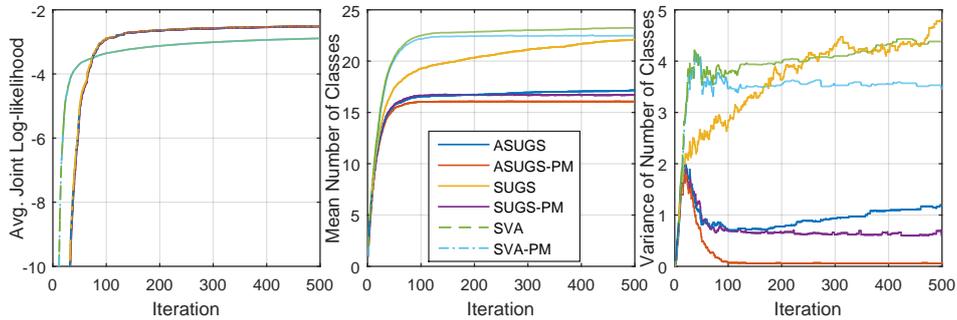}
	}
	}
	\caption{(a) Clustering performance of SVA, SVA-PM, ASUGS and ASUGS-PM on synthetic data set. ASUGS-PM identifies the 16 clusters correctly. (b) Joint log-likelihood on synthetic data, mean and variance of number of classes as a function of iteration. The likelihood values were evaluated on a held-out set of $1000$ samples. ASUGS-PM achieves the highest log-likelihood and has the lowest asymptotic variance on the number of classes. }
	\label{fig:16qam}
\end{figure}

\begin{figure}[ht]
\centering
		\includegraphics[width=0.80\textwidth]{./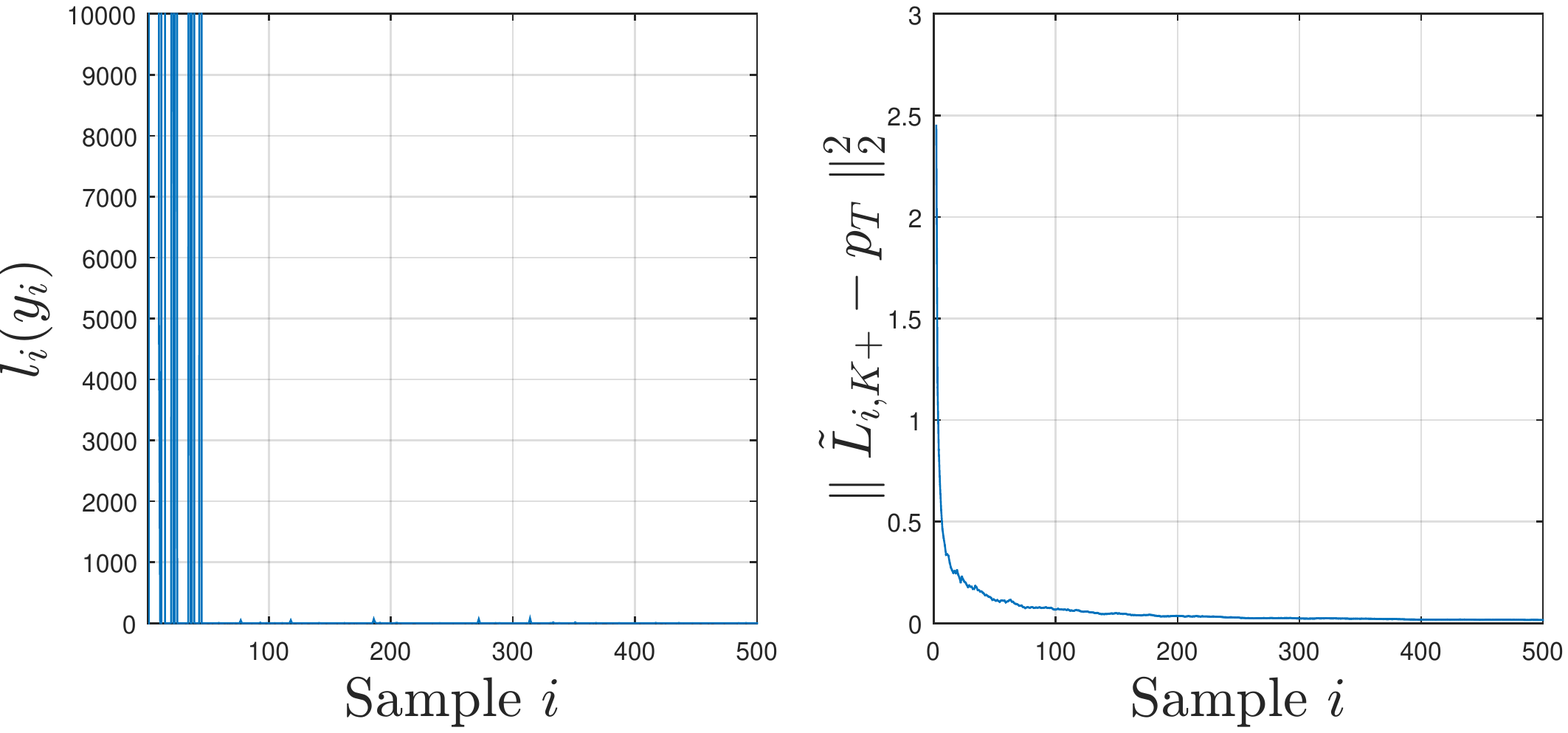}
		\label{fig:asugs_stats_16qam}
		\caption{Likelihood ratio $l_i(\by_i)=\frac{L_{i,K+}(\by_i)}{\tilde{L}_{i,K+}(\by_i)}$ (left) and $L_2$-distance between $\tilde{L}_{i,K+}(\cdot)$ and true mixture distribution $p_T$ (right) for synthetic example (see \ref{fig:16qam}). }
\end{figure}

\subsection{Real Data Set}
We applied the online nonparametric Bayesian methods for clustering image data. We used the MNIST data set, which consists of $60,000$ training samples, and $10,000$ test samples. Each sample is a $28\times 28$ image of a handwritten digit (total of $784$ dimensions), and we perform PCA pre-processing to reduce dimensionality to $d=50$ dimensions as in \cite{Kurihara:2006}.

We use only a random $1.667\%$ subset, consisting of $1000$ random samples for training. This training set contains data from all $10$ digits with an approximately uniform proportion. Fig. \ref{fig:mnist} shows the predictive log-likelihood over the test set, and the mean images for clusters obtained using ASUGS-PM and SVA-PM, respectively. We note that ASUGS-PM achieves higher log-likelihood values and finds all digits correctly using only $23$ clusters, while SVA-PM finds some digits using $56$ clusters. Furthermore, the SVA-PM results in noisy-looking image clusters, while ASUGS-PM consistently has clear digits.
\begin{figure}[ht]
	\centering{
	\subfigure[]{
		\centering
		\includegraphics[width=0.40\textwidth]{./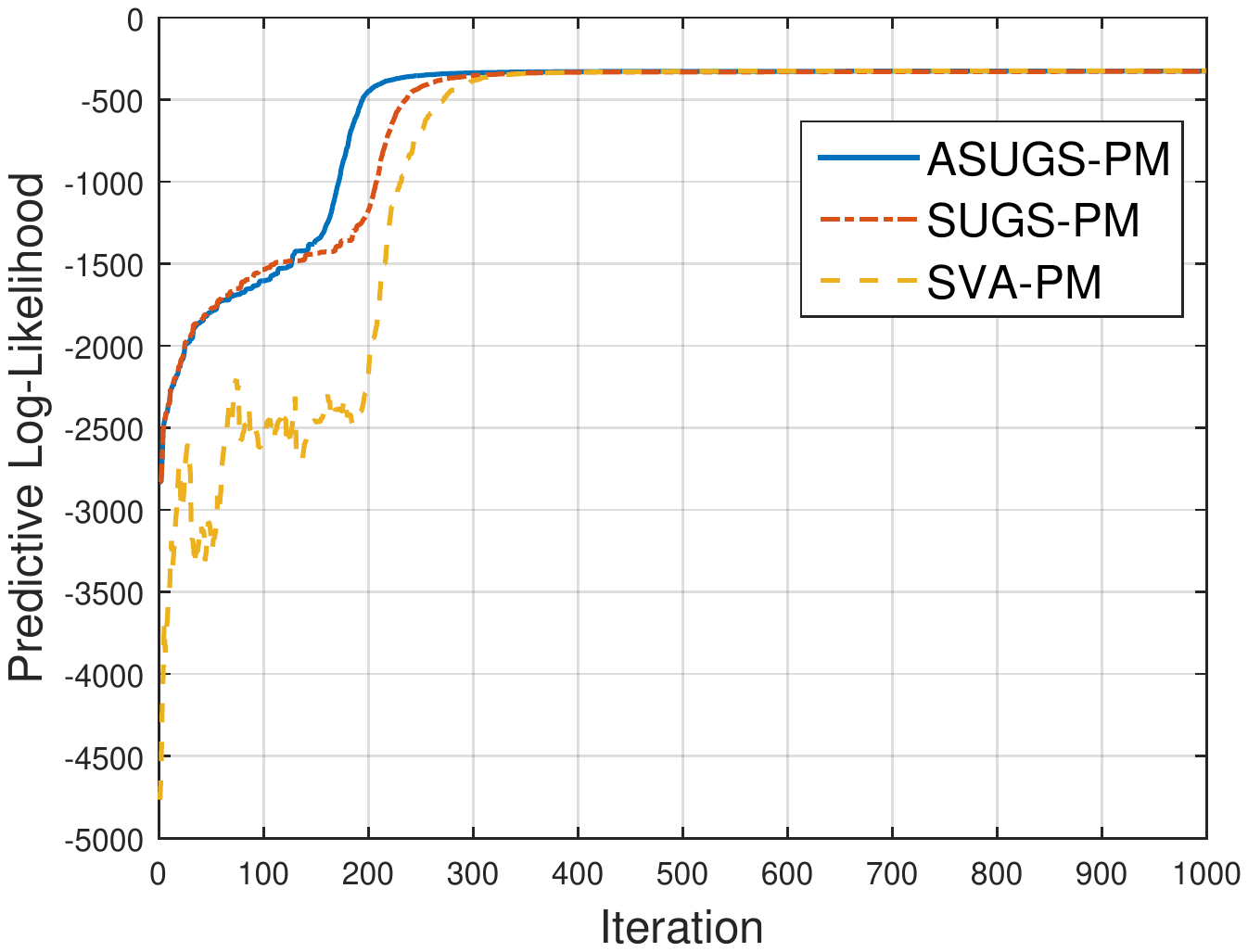}
	}
	\subfigure[]{
		\centering
		\includegraphics[width=0.158\textwidth]{./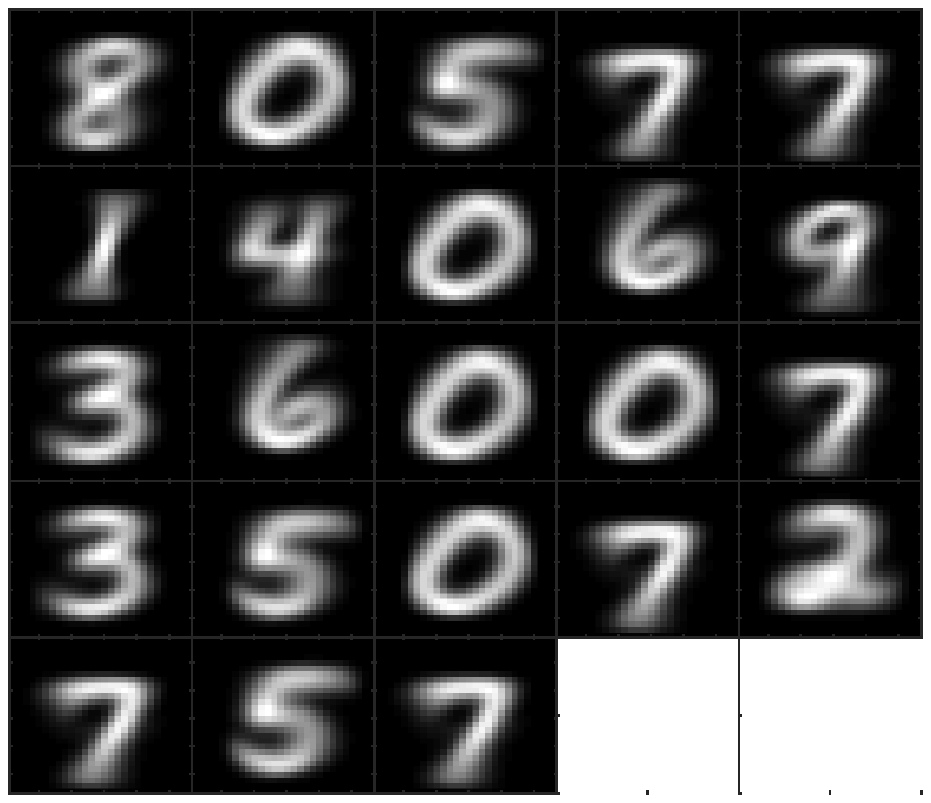}
	}
	\subfigure[]{
		\centering
		\includegraphics[width=0.25\textwidth]{./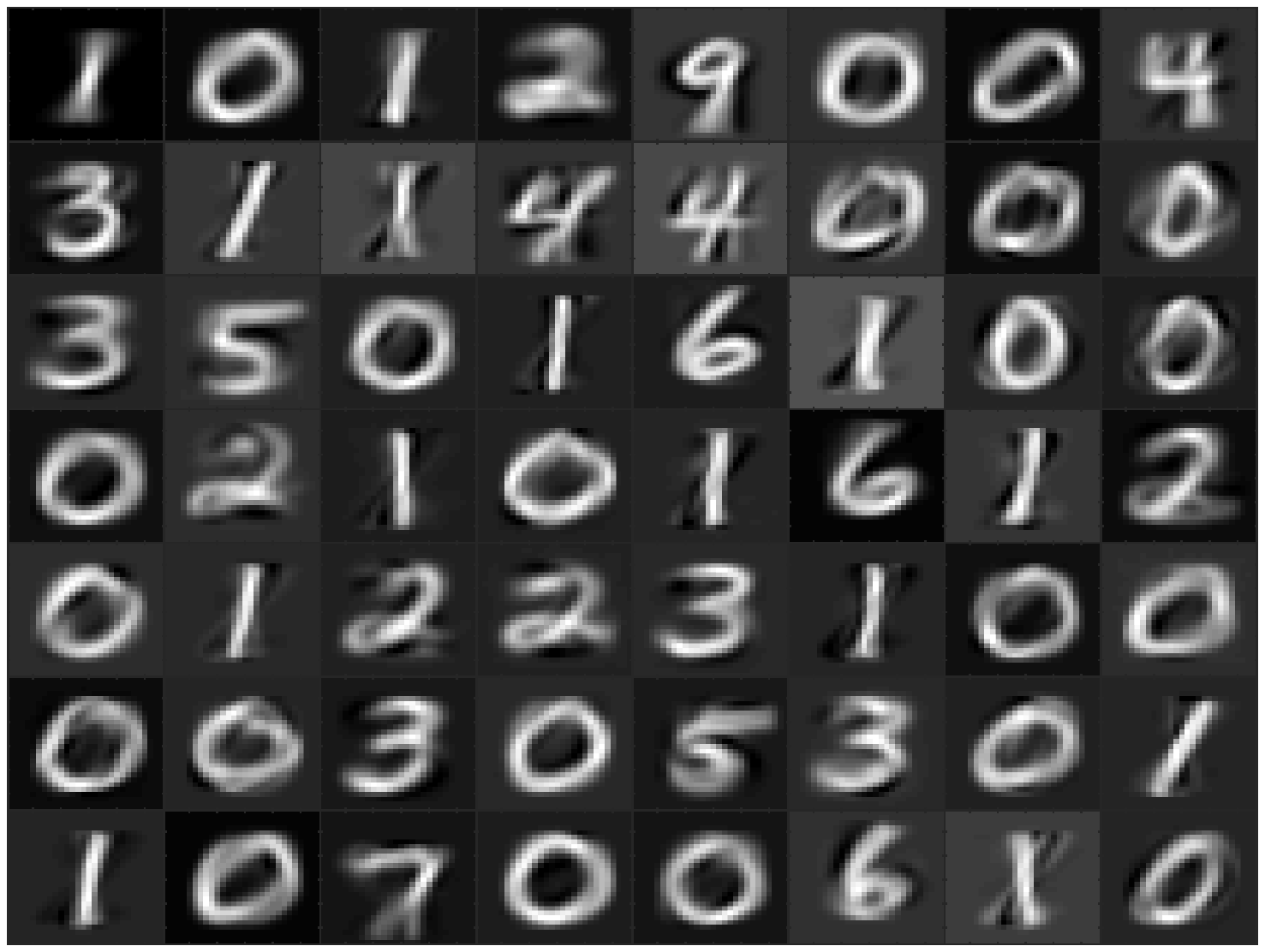}
	}
	}
	\caption{Predictive log-likelihood (a) on test set, mean images for clusters found using ASUGS-PM (b) and SVA-PM (c) on MNIST data set. }
	\label{fig:mnist}
\end{figure}

\subsection{Discussion}
Although both SVA and ASUGS methods have similar computational complexity and use decisions and information obtained from processing previous samples in order to decide on class innovations, the mechanics of these methods are quite different. ASUGS uses an adaptive $\alpha$ motivated by asymptotic theory, while SVA uses a fixed $\alpha$. Furthermore, SVA updates the parameters of all the components at each iteration (in a weighted fashion) while ASUGS only updates the parameters of the most-likely cluster, thus minimizing leakage to unrelated components. The $\lambda$ parameter of ASUGS does not affect performance as much as the threshold parameter $\epsilon$ of SVA does, which often leads to instability requiring lots of pruning and merging steps and increasing latency. This is critical for large data sets or streaming applications, because cross-validation would be required to set $\epsilon$ appropriately. We observe higher log-likelihoods and better numerical stability for ASUGS-based methods in comparison to SVA. The mathematical formulation of ASUGS allows for theoretical guarantees (Theorem 2), and asymptotically normal predictive distribution.

\section{Conclusion} \label{sec:conclusion}
We developed a fast online clustering and parameter estimation algorithm for Dirichlet process mixtures of Gaussians, capable of learning in a single data pass. Motivated by large-sample asymptotics, we proposed a novel low-complexity data-driven adaptive design for the concentration parameter and showed it leads to logarithmic growth rates on the number of classes. Through experiments on synthetic and real data sets, we show our method achieves better performance and is as fast as other state-of-the-art online learning DPMM methods. 

\clearpage
\vfill\pagebreak

\appendix

\section{Appendix A} \label{app:appA}

We consider the general case of an unknown mean and covariance for each class. Let $\bT$ denote the precision (or inverse covariance) matrix. The probabilistic model for the mean and covariance matrix of each class is given as:
\begin{align}
	\by_i|\bmu,\bT &\sim \mathcal{N}(\cdot|\bmu,\bT) \nonumber \\
	\mu|\bT &\sim \mathcal{N}(\cdot|\bmu_0,c_o \bT) \nonumber \\
	\bT &\sim \mathcal{W}(\cdot|\delta_0,\bV_0) \label{eq:prob_model}
\end{align}
where $\mathcal{N}(\cdot|\bmu,\bT)$ denote the observation density which is assumed to be multivariate normal with mean $\bmu$ and precision matrix $\bT$. The parameters $\btheta=(\bmu,\bT)\in \Omega_1 \times \Omega_2$ follow a normal-Wishart joint distribution. The domains here are $\Omega_1=\RR^d$ and $\Omega_2=S_{++}^d$ is the positive definite cone. This leads to closed-form expressions for $L_{i,h}(\by_i)$'s due to conjugacy \cite{Tzikas:2008}. For concreteness, let us write the distributions of the model (\ref{eq:prob_model}):
\begin{align*}
	f(\by_i|\theta) = p(\by_i|\bmu,\bT) &= \frac{\det(\bT)^{1/2}}{(2\pi)^{d/2}} \exp\left( -\frac{1}{2}(\by_i-\bmu)^T \bT (\by_i-\bmu) \right) \\
	p(\bmu|\bT) = p(\theta_1|\bTheta_2) &= \frac{\det(c_0\bT)^{1/2}}{(2\pi)^{d/2}} \exp\left( -\frac{c_0}{2}(\bmu-\bmu_0)^T \bT (\bmu-\bmu_0) \right) \\
	p(\bT) = p(\bTheta_2) &= \frac{\det(\bV_0)^{-\delta_0}}{2^{d\delta_0} \Gamma_d(\delta_0)} \det(\bT)^{\delta_0-\frac{d+1}{2}} \exp(-\frac{1}{2}\tr(\bV_0^{-1}\bT))
\end{align*}
where $\Gamma_d(\cdot)$ is the multivariate Gamma function.

To calculate the class posteriors, the conditional likelihoods of $\by_i$ given assignment to class $h$ and the previous class assignments need to be calculated first. We derive closed-form expressions for these quantities in this section under the probabilistic model (\ref{eq:prob_model}).

The conditional likelihood of $\by_i$ given assignment to class $h$ and the history $(\by^{(i-1)},\gamma^{(i-1)})$ is given by:
\begin{equation} \label{eq:L_ih}
	L_{i,h}(\by_i) = \int f(\by_i|\btheta_h) \pi(\btheta_h|\by^{(i-1)},\gamma^{(i-1)}) d\btheta_h
\end{equation}
We thus need to obtain an expression for the posterior distribution $\pi(\btheta_h|\by^{(i-1)},\gamma^{(i-1)})$. Due to the conjugacy of the distributions involved in (\ref{eq:prob_model}), the posterior distribution $\pi(\btheta_h|\by^{(i-1)},\gamma^{(i-1)})$ always has the form:
\begin{equation} \label{eq:posterior_theta}
	\pi(\btheta_h|\by^{(i-1)},\gamma^{(i-1)}) = \mathcal{N}(\bmu_h|\bmu_h^{(i-1)},c_h^{(i-1)}\bT_h) \mathcal{W}(\bT_h|\delta_h^{(i-1)},\bV_h^{(i-1)})
\end{equation}
where $\bmu_h^{(i-1)},c_h^{(i-1)},\delta_h^{(i-1)},\bV_h^{(i-1)}$ are hyperparameters that can be recursively computed as new samples come in. This would greatly simplify the computational complexity of the second step of the SUGS algorithm. Next, we derive the form of this recursive computation of the hyperparameters.

For simplicity of the derivation, let us consider the initial case $\by=\by_1$. Then, from Bayes' rule:
\begin{equation*}
	p(\btheta|\by) = p(\bmu,\bT|\by) = p(\bmu|\bT,\by) p(\bT|\by)
\end{equation*}

\subsection{Calculation of $p(\bmu|\bT,\by)$}
Note the factorization:
\begin{equation*}
	p(\bmu|\bT,\by) \propto p(\by|\bmu,\bT) p(\bmu|\bT)
\end{equation*}
According to (\ref{eq:prob_model}), we can write:
\begin{align*}
	\by &= \bmu + \bSigma^{1/2}\bepsilon \\
	\bmu &= \bmu_0 + \bSigma_0^{1/2}\bepsilon'
\end{align*}
where $\bepsilon \sim N(0,I), \bepsilon' \sim N(0,I)$, $\bepsilon$ is independent of $\bepsilon'$ and $\bSigma = \bT^{-1}, \bSigma_0 = (c_0 \bT)^{-1}$. From this, it follows that the conditional density $p(\bmu|\bT,\by)$ is also multivariate normal with mean $\EE[\bmu|\bT,\by]$ and covariance $\Cov(\bmu|\bT,\by)$. Note that:
\begin{align*}
	\EE[\by|\bT] &= \bmu_0 \\
	\Cov(\by|\bT) &= \EE[\Cov(\by|\bmu,\bT)|\bT] + \Cov(\EE[\by|\bmu,\bT]|\bT) \\
		&= \bSigma + \bSigma_0  = (1+c_0^{-1}) \bT^{-1} \\
	\Cov(\bmu,\by|\bT) &= \bSigma_0
\end{align*}
Using these facts, we obtain:
\begin{align*}
	\EE[\bmu|\bT,\by] &= \EE[\bmu|\bT] + \Cov(\bmu,\by|\bT)\Cov(\by|\bT)^{-1}(\by-\EE[\by|\bT]) \\
		&= \bmu_0 + c_0^{-1} \bT^{-1} ((1+c_0^{-1})\bT^{-1})^{-1} (\by-\bmu_0) \\
		&= \bmu_0 + c_0^{-1} (1+c_0^{-1})^{-1} (\by-\bmu_0) \\
		&= \frac{1}{1+c_0} \by + \frac{c_0}{1+c_0} \bmu_0 \\
	\Cov(\bmu|\bT,\by) &= \Cov(\bmu|\bT) - \Cov(\bmu,\by|\bT)\Cov(\by|\bT)^{-1}\Cov(\by,\bmu|\bT) \\
		&= \bSigma_0 - \bSigma_0(\bSigma+\bSigma_0)^{-1}\bSigma_0^T \\
		&= c_0^{-1} \left( 1 - \frac{c_0^{-1}}{1+c_0^{-1}} \right) \bT^{-1} \\
		&= \frac{c_0^{-1}}{1+c_0^{-1}} \bT^{-1}
\end{align*}

Thus, we have:
\begin{equation*}
	p(\bmu|\bT,\by) = \mathcal{N}\left( \bmu \Bigg| \frac{1}{1+c_0} \by + \frac{c_0}{1+c_0} \bmu_0, (1+c_0)\bT \right)
\end{equation*}
where the conditional precision matrix becomes $(1+c_0)\bT$. As a result, once the $\gamma_i$th component is chosen in the SUGS selection step, the parameter updates for the $\gamma_i$th class become:
\begin{align*}
	\bmu_{\gamma_i}^{(i)} &= \frac{1}{1+c_{\gamma_i}^{(i-1)}} \by_i + \frac{c_{\gamma_i}^{(i-1)}}{1+c_{\gamma_i}^{(i-1)}} \bmu_{\gamma_i}^{(i-1)} \nonumber \\
	c_{\gamma_i}^{(i)}    &= c_{\gamma_i}^{(i-1)} + 1        		
\end{align*}

\subsection{Calculation of $p(\bT|\by)$}
Next, we focus on calculating $p(\bT|\by)=\int_{\RR^d} p(\bT,\bmu|\by) d\bmu$, where
\begin{align*}
	p(\bT,\bmu|\by) &\propto p(\by|\bT,\bmu) p(\bmu|\bT) p(\bT) \\
		&\propto \det(\bT)^{(\delta_0+1/2)-\frac{d+1}{2}} \det(\bT)^{1/2} \exp\left( -\frac{1}{2} \tr(\bV_0^{-1}\bT) \right) \\
		&\quad \times \exp\left( -\frac{1}{2}\left[ c_0(\bmu-\bmu_0)^T \bT (\bmu-\bmu_0) + (\by-\bmu)^T \bT (\by-\bmu) \right] \right)
\end{align*}
Rewriting the term inside the brackets by completing the square, we obtain:
\begin{align*}
	&c_0(\bmu-\bmu_0)^T \bT (\bmu-\bmu_0) + (\by-\bmu)^T \bT (\by-\bmu) \\
	  &= c_0 \nn \bT^{1/2} \bmu - \bT^{1/2} \bmu_0 \nn_2^2 + \nn \bT^{1/2} \by - \bT^{1/2} \bmu \nn_2^2 \\
		&= (1+c_0) \left\{ \nn \bT^{1/2} \bmu \nn_2^2 - 2\left<\bT^{1/2} \bmu,\frac{c_0 \bT^{1/2} \bmu_0 + \bT^{1/2} \by}{1+c_0}\right> + \frac{c_0 \nn\bT^{1/2}\bmu_0\nn_2^2 + \nn\bT^{1/2}\by\nn_2^2}{1+c_0} \right\} \\
		&= (1+c_0) \left\{ \nn \bT^{1/2}\bmu- \frac{c_0 \bT^{1/2} \bmu_0 + \bT^{1/2} \by}{1+c_0}\nn_2^2 - \nn\frac{c_0 \bT^{1/2} \bmu_0 + \bT^{1/2} \by}{1+c_0}\nn_2^2 + \frac{c_0 \nn\bT^{1/2}\bmu_0\nn_2^2 + \nn\bT^{1/2}\by\nn_2^2}{1+c_0}  \right\}
\end{align*}
Integrating out $\bmu$, we obtain:
\begin{align*}
	\int \exp&\left( -\frac{1}{2}\left[ c_0(\bmu-\bmu_0)^T \bT (\bmu-\bmu_0) + (\by-\bmu)^T \bT (\by-\bmu) \right] \right) d\bmu \\
		&= \exp\left(-\frac{1+c_0}{2}\left(\frac{c_0 \nn\bT^{1/2}\bmu_0\nn_2^2 + \nn\bT^{1/2}\by\nn_2^2}{1+c_0} - \nn\frac{c_0 \bT^{1/2} \bmu_0 + \bT^{1/2} \by}{1+c_0}\nn_2^2\right)\right) \\
		&\quad \times \int \exp(-\frac{1}{2}\nn \bT^{1/2}\bmu - \frac{c_0 \bT^{1/2} \bmu_0 + \bT^{1/2} \by}{1+c_0} \nn_2^2) d\bmu \\
		&\propto \det(\bT)^{-1/2} \exp\left( -\frac{1}{2} \frac{c_0}{1+c_0} (\by-\bmu_0)^T \bT (\by-\bmu_0) \right)
\end{align*}
Using this result, we obtain:
\begin{equation*}
	p(\bT|\by) \propto \det(\bT)^{(\delta_0+1/2)-\frac{d+1}{2}} \exp\left(-\frac{1}{2} \tr\left(\bT \left\{\bV_0^{-1} + \frac{c_0}{1+c_0} (\by-\bmu_0)(\by-\bmu_0)^T \right\}\right) \right)
\end{equation*}
As a result, the conditional density is recognized to be a Wishart distribution
\begin{equation*}
	\mathcal{W}\left( \bT \Bigg| \delta_0 + \frac{1}{2}, \left\{\bV_0^{-1} + \frac{c_0}{1+c_0} (\by-\bmu_0)(\by-\bmu_0)^T \right\}^{-1} \right).
\end{equation*}
Thus, the parameter updates for the $\gamma_i$th class become:
\begin{align}
	\delta_{\gamma_i}^{(i)} &= \delta_{\gamma_i}^{(i-1)} + \frac{1}{2} \nonumber \\
	\bV_{\gamma_i}^{(i)} &= \left\{(\bV_{\gamma_i}^{(i-1)})^{-1} + \frac{c_{\gamma_i}^{(i-1)}}{1+c_{\gamma_i}^{(i-1)}} (\by_i-\bmu_{\gamma_i}^{(i-1)})(\by_i-\bmu_{\gamma_i}^{(i-1)})^T \right\}^{-1} \label{eq:param_update_2}
\end{align}
For numerical stability and ease of interpretation, we define $$\bSigma_h^{(i)} := \frac{(\bV_h^{(i)})^{-1}}{2\delta_h^{(i)}}.$$ This is the inverse of the mean of the Wishart distribution $\mathcal{W}(\cdot|\delta_h^{(i)},\bV_h^{(i)})$, and can be interpreted as the covariance matrix of class $h$ at iteration $i$. From (\ref{eq:param_update_2}), we have:
\begin{align*}
	\bSigma_h^{(i)} &= \frac{(\bV_h^{(i)})^{-1}}{2\delta_h^{(i)}} \\
		&= \frac{2\delta_h^{(i-1)}}{2\delta_h^{(i)}} \frac{ (\bV_h^{(i-1)})^{-1} }{2\delta_h^{(i-1)}} + \frac{1}{2\delta_h^{(i)}} \frac{c_h^{(i-1)}}{1+c_h^{(i-1)}} (\by_i-\bmu_h^{(i-1)})(\by_i-\bmu_h^{(i-1)})^T  \\
		&= \frac{2\delta_h^{(i-1)}}{1+2\delta_h^{(i-1)}} \bSigma_h^{(i-1)} + \frac{1}{1+2\delta_h^{(i-1)}} \frac{c_h^{(i-1)}}{1+c_h^{(i-1)}} (\by_i-\bmu_h^{(i-1)})(\by_i-\bmu_h^{(i-1)})^T
\end{align*}
Thus, the recursive updates (\ref{eq:param_update_2}) can be equivalently restated as:
\begin{align*}
	\delta_{\gamma_i}^{(i)} &= \delta_{\gamma_i}^{(i-1)} + \frac{1}{2} \nonumber \\
  \bSigma_h^{(i)}  &= \frac{2\delta_h^{(i-1)}}{1+2\delta_h^{(i-1)}} \bSigma_h^{(i-1)} + \frac{1}{1+2\delta_h^{(i-1)}} \frac{c_h^{(i-1)}}{1+c_h^{(i-1)}} (\by_i-\bmu_h^{(i-1)})(\by_i-\bmu_h^{(i-1)})^T
\end{align*}
If the starting matrix $\bSigma_h^{(0)}$ is positive definite, then all the matrices $\{\bSigma_h^{(i)}\}$ will remain positive definite.

\section{Appendix B} \label{app:appB}

Now, let us return to the calculation of (\ref{eq:L_ih}).
\begin{align*}
	L_{i,h}(\by_i) &= \int_{S_{++}^d} \int_{\RR^d} \mathcal{N}(\by_i|\bmu,\bT) \mathcal{N}(\bmu|\bmu_{h}^{(i-1)},c_h^{(i-1)}\bT) \mathcal{W}(\bT|\delta_h^{(i-1)},\bV_h^{(i-1)}) d\bmu d\bT \\
		&=  \int_{S_{++}^d} \mathcal{W}(\bT|\delta_h^{(i-1)},\bV_h^{(i-1)}) \left\{ \int_{\RR^d} \mathcal{N}(\by_i|\bmu,\bT) \mathcal{N}(\bmu|\bmu_{h}^{(i-1)},c_h^{(i-1)}\bT) d\bmu \right\} d\bT
\end{align*}
Evaluating the inner integral within the brackets:
\begin{align*}
	&\int_{\RR^d} \mathcal{N}(\by_i|\bmu,\bT) \mathcal{N}(\bmu|\bmu_{h}^{(i-1)},c_h^{(i-1)}\bT) d\bmu \\
		&\propto \det(\bT)^{1/2} \det(c_h^{(i-1)} \bT)^{1/2} \\
		&\quad \times \int_{\RR^d} \exp\left(-\frac{1}{2} \left[c_h^{(i-1)}(\bmu-\bmu_h^{(i-1)})^T\bT(\bmu-\bmu_h^{(i-1)}) + (\by_i-\bmu)^T\bT(\by_i-\bmu)\right] \right) d\bmu \\
		&= \det(\bT)^{1/2} \det(c_h^{(i-1)} \bT)^{1/2} \\
		&\quad \times \exp\left( -\frac{1}{2} \frac{c_h^{(i-1)}}{1+c_h^{(i-1)}} (\by_i-\bmu_h^{(i-1)})^T\bT(\by_i-\bmu_h^{(i-1)}) \right) \\
		&\quad \times \int \exp\left(-\frac{1+c_h^{(i-1)}}{2}(\bmu-\bb)^T\bT(\bmu-\bb) \right) d\bmu \\
		&\propto \frac{\det(\bT)^{1/2} \det(c_h^{(i-1)}\bT)^{1/2}}{\det((1+c_h^{(i-1)})\bT)^{1/2}} \exp\left(-\frac{1}{2} \tr\left(\bT \left\{\frac{c_h^{(i-1)}}{1+c_h^{(i-1)}} (\by_i-\bmu_h^{(i-1)})(\by_i-\bmu_h^{(i-1)})^T \right\} \right) \right) \\
		&= \left(\frac{c_h^{(i-1)}}{1+c_h^{(i-1)}}\right)^{d/2} \det(\bT)^{1/2} \exp\left(-\frac{1}{2} \tr\left(\bT \left\{\frac{c_h^{(i-1)}}{1+c_h^{(i-1)}} (\by_i-\bmu_h^{(i-1)})(\by_i-\bmu_h^{(i-1)})^T \right\} \right) \right)
\end{align*}
Using this closed-form expression for the inner integral, we further obtain:
\begin{align}
	&L_{i,h}(\by_i)\propto \left(\frac{c_h^{(i-1)}}{1+c_h^{(i-1)}}\right)^{d/2} \int_{S_{++}^d} \frac{\det(\bV_h^{(i-1)})^{-\delta_h^{(i-1)}}}{2^{d\delta_h^{(i-1)}} \Gamma_d(\delta_h^{(i-1)})} \det(\bT)^{(\delta_h^{(i-1)}+1/2)-\frac{d+1}{2}} \nonumber \\
		&\quad \times \exp\left(-\frac{1}{2} \tr\left(\bT \left\{(\bV_h^{(i-1)})^{-1} + \frac{c_h^{(i-1)}}{1+c_h^{(i-1)}} (\by_i-\bmu_h^{(i-1)})(\by_i-\bmu_h^{(i-1)})^T \right\} \right) \right)  d\bT \\
		&\propto \left(\frac{c_h^{(i-1)}}{1+c_h^{(i-1)}}\right)^{d/2} \frac{\Gamma_d(\delta_h^{(i-1)}+\frac{1}{2})}{\Gamma_d(\delta_h^{(i-1)})} \nonumber\\
		&\quad \times \frac{\det(\bV_h^{(i-1)})^{-\delta_h^{(i-1)}}}{\det\left(\left\{(\bV_h^{(i-1)})^{-1} + \frac{c_h^{(i-1)}}{1+c_h^{(i-1)}}(\by_i-\bmu_h^{(i-1)})(\by_i-\bmu_h^{(i-1)})^T \right\}^{-1}\right)^{-(\delta_h^{(i-1)}+\frac{1}{2})}} \nonumber\\
		&= \left(r_h^{(i-1)}\right)^{d/2} \frac{\Gamma_d(\delta_h^{(i-1)}+\frac{1}{2})}{\Gamma_d(\delta_h^{(i-1)})} \frac{\det((\bV_h^{(i-1)})^{-1})^{-1/2}}{\det \left( \bI_d + r_h^{(i-1)}(\by_i-\bmu_h^{(i-1)})(\by_i-\bmu_h^{(i-1)})^T\bV_h^{(i-1)} \right)^{\delta_h^{(i-1)}+\frac{1}{2}}} \nonumber\\
		&= \left(r_h^{(i-1)}\right)^{d/2} \frac{\Gamma_d(\delta_h^{(i-1)}+\frac{1}{2})}{\Gamma_d(\delta_h^{(i-1)})} \frac{\det(\bV_h^{(i-1)})^{1/2}}{\left( 1 + r_h^{(i-1)}(\by_i-\bmu_h^{(i-1)})^T \bV_h^{(i-1)} (\by_i-\bmu_h^{(i-1)}) \right)^{\delta_h^{(i-1)}+\frac{1}{2}}} \\
		&= \left( \frac{r_h^{(i-1)}}{2\delta_h^{(i-1)}} \right)^{d/2} \frac{\Gamma_d(\delta_h^{(i-1)}+\frac{1}{2})}{\Gamma_d(\delta_h^{(i-1)})} \frac{\det((\bSigma_h^{(i-1)})^{-1})^{1/2}}{\left( 1 + \frac{r_h^{(i-1)}}{2\delta_h^{(i-1)}} (\by_i-\bmu_h^{(i-1)})^T (\bSigma_h^{(i-1)})^{-1} (\by_i-\bmu_h^{(i-1)}) \right)^{\delta_h^{(i-1)}+\frac{1}{2}}}
		\label{eq:Lih}
\end{align}
where we used the determinant identity $\det(\bI + \ba\bb^T\bM) = 1 + \bb^T\bM\ba$ in the last step. We also defined $r_h^{(i)} := \frac{c_h^{(i)}}{1+c_h^{(i)}}$ and used $\bV_h^{(i)} = \frac{(\bSigma_h^{(i)})^{-1}}{2\delta_h^{(i)}}$.

\section{Appendix C} \label{app:appC}
\begin{proof}
It is sufficient to establish the limit for $\lim_{N \rightarrow \infty} \sum_{k=m}^N \log(1+\alpha/k)/ \log N$ for fixed $m$. Choose $m$ such that $|\alpha| < m-1$ and use $\log (1-x) = \sum_{k=1}^{\infty} x^k/k$ for $|x| < 1$ to get
\begin{equation}   \label{eq:1}
  \sum_{k=m}^N \log \left(1+\frac{\alpha}{k} \right) = \sum_{l=1}^{\infty} (-1)^{l+1} \frac{\alpha^l}{l} \sum_{k=m}^N \frac{1}{k^l}.
\end{equation}
Separate (\ref{eq:1}) into two terms:
\begin{equation}   \label{eq:2}
  \sum_{l=1}^{\infty} (-1)^{l+1} \frac{\alpha^l}{l} \sum_{k=m}^N \frac{1}{k^l}=
  \alpha \sum_{k=m}^N \frac{1}{k} +\sum_{l=2}^{\infty}
  (-1)^{l+1} \frac{\alpha^l}{l} \sum_{k=m}^N \frac{1}{k^l} .
\end{equation}
The first term is expressed in terms of the Euler-Mascheroni constant
$\gamma_e$ as
\begin{equation*}
  \sum_{k=m}^N \frac{1}{k} = \log N - \gamma_e - \sum_{k=1}^{m-1}  \frac{1}{k} + o(1) .
\end{equation*}
Thus, dividing by $\log N$ and taking the limit $N \rightarrow \infty$
we have a limiting value of unity. The second term of (\ref{eq:2}) is
bounded. To see this, use, for $l > 1$,
\begin{equation*}
  \sum_{k=m}^{\infty} \frac{1}{k^l} \leq \int_{m-1}^{\infty} \, \frac{dx}{x^l} = \frac{1}{l-1} (m-1)^{-(l-1)} .
\end{equation*}
Then the second term of (\ref{eq:2}) is bounded by
\begin{align*}
  \sum_{l=2}^{\infty} \frac{\alpha^l}{l} \sum_{k=m}^{\infty} \frac{1}{k^l} &\leq \sum_{l=2}^{\infty} \frac{\alpha^l}{l(l-1)}  (m-1)^{-(l-1)} \\
		&= (m-1) \sum_{l=2}^{\infty} \frac{1}{l(l-1)} \left(\frac{\alpha}{m-1} \right)^l < \infty .
\end{align*}
The result follows since the second term, being bounded, vanished when dividing by $\log N$ and taking the limit $N\to\infty$.
\end{proof}

\section{Appendix D} \label{app:appD}

\begin{lem} \label{lem:majorize}
Let $r_n$ and $\bar{r}_n$ be random sequences with the update laws
\begin{eqnarray*}
		P( r_{n+1}=r_n+1 ) & = & \tau_n \\
		P( r_{n+1}=r_n ) & = & 1-\tau_n
\end{eqnarray*}
and
\begin{eqnarray*}
		P( \bar{r}_{n+1}=\bar{r}_n+1 ) & = & \sigma_n \\
		P( \bar{r}_{n+1}=\bar{r}_n ) & = & 1-\sigma_n,
\end{eqnarray*}
and assume $\sigma_n \geq \tau_n$ for all $n \geq 1$ and that $\bar{r}_0 = r_0 = 0$. Then $\EE[\bar{r}_n] \geq \EE[r_n]$ for all $n\geq 1$.
\end{lem}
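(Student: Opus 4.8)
The plan is to establish the \emph{pathwise} domination $\bar r_n \ge r_n$ almost surely and then read off $\EE[\bar r_n]\ge\EE[r_n]$ by monotonicity of expectation. The right tool is a monotone coupling: I would realize both chains on one probability space driven by a single i.i.d. sequence $U_0,U_1,\dots\sim\mathrm{Unif}[0,1]$ with each $U_n$ independent of the history $\mathcal F_n$ up to time $n$, and set
\[
 r_{n+1} = r_n + \mathbf{1}\{U_n \le \tau_n\}, \qquad \bar r_{n+1} = \bar r_n + \mathbf{1}\{U_n \le \sigma_n\}.
\]
Conditioning on $\mathcal F_n$, each increment is Bernoulli with the prescribed parameter, so this reproduces the transition laws in the statement; crucially, it forces the two increments to be comparable on every sample path.

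I would then induct on $n$. The base case holds since $\bar r_0 = r_0 = 0$. For the step, assume $\bar r_n \ge r_n$ a.s. The key observation is that $\{U_n \le \tau_n\}\subseteq\{U_n \le \sigma_n\}$ whenever $\sigma_n \ge \tau_n$, so $\mathbf{1}\{U_n\le\sigma_n\}\ge\mathbf{1}\{U_n\le\tau_n\}$; adding this to the inductive hypothesis gives $\bar r_{n+1}\ge r_{n+1}$. Everything therefore reduces to verifying $\sigma_n \ge \tau_n$ along the coupled paths. In the abstract lemma this is exactly the stated hypothesis, so the induction closes immediately and $\bar r_n \ge r_n$ a.s., whence $\EE[\bar r_n]\ge\EE[r_n]$.

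When the lemma is applied inside Theorem~\ref{thm:stabilitythm} the transition probabilities are state-dependent: $\sigma_n=\min\!\big((\bar r_n+1)/a_n,\,1\big)$ while $\tau_n \le \min\!\big((r_n+1)/a_n,\,1\big)$, with $a_n = l_{n+1}(\by_{n+1})^{-1} n(\lambda+\log n)$ an exogenous, data-driven quantity not depending on the chain states. Here I would additionally invoke monotonicity of the selection map: $x\mapsto\min((x+1)/a_n,1)$ is nondecreasing, so the inductive hypothesis $\bar r_n \ge r_n$ yields $\sigma_n = \min((\bar r_n+1)/a_n,1)\ge\min((r_n+1)/a_n,1)\ge\tau_n$. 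Thus the hypothesis $\bar r_n\ge r_n$ is used twice per step: once through monotonicity to re-establish $\sigma_n\ge\tau_n$ at the current states, and once to combine the dominating increment with the already-dominating value.

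The main obstacle is precisely this state-dependence: the comparison $\sigma_n\ge\tau_n$ is not a free pointwise fact, because the two transition probabilities are evaluated at \emph{different} state variables ($\bar r_n$ versus $r_n$). The coupling is what lets monotonicity convert an ordering of states into an ordering of transition probabilities, and driving both chains with the \emph{same} $U_n$ is what converts that into an ordering of increments. I would note that a purely in-expectation induction, $\EE[r_{n+1}]=\EE[r_n]+\EE[\tau_n]$ and likewise for $\bar r_n$, also gives the conclusion if one simply assumes $\sigma_n\ge\tau_n$ almost surely; but in the application that assumption itself requires $\bar r_n\ge r_n$, so the expectation-only route is circular, whereas the coupling carries the pathwise domination through the induction and avoids the circularity.
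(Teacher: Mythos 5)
Your proof is correct, but it takes a genuinely different route from the paper's. The paper proves first\nobreakdash-order stochastic dominance distributionally: it inducts on the tail probabilities, showing $P(r_n>t)\leq P(\bar r_n>t)$ for all $t$ via the one-step identity $P(r_{n+1}>t)=(1-\tau_n)P(r_n>t)+\tau_nP(r_n>t-1)$, the inductive hypothesis, the monotonicity $P(\bar r_n>t)\leq P(\bar r_n>t-1)$, and the assumption $\sigma_n\geq\tau_n$; it then integrates the tails to compare means. You instead build an explicit monotone coupling driven by a common uniform sequence and obtain the stronger pathwise statement $\bar r_n\geq r_n$ almost surely, from which the comparison of expectations is immediate. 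The two arguments are the two classical proofs of the same dominance fact, but they are not interchangeable in their hypotheses: the paper's tail-probability identity is exact only when $\tau_n$ is deterministic (or independent of $r_n$), which is a mild looseness given that in the application $\tau_n$ depends on $r_n$ through $\alpha_n$; your coupling, combined with the observation that $x\mapsto\min((x+1)/a_n,1)$ is nondecreasing so that the ordering of states propagates to an ordering of transition probabilities, handles the state-dependent case cleanly and makes explicit why the in-expectation induction alone would be circular there. What the paper's approach buys is economy --- no auxiliary probability space or coupling construction is needed, and the lemma is stated and proved purely at the level of marginal laws; what yours buys is robustness to the state-dependence that actually arises when the lemma is invoked in Theorem~\ref{thm:stabilitythm}.
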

\begin{proof}
We first use induction to show that $P(r_n > t) \leq P(\bar{r}_n > t) $ holds for all $n$.

The base case is trivial because $r_0=\bar{r}_0$. We next prove that given
\begin{equation} \label{eq:65}
  P(r_n > t) \leq P(\bar{r}_n > t) 
\end{equation}
for a particular $n$ and all $t \in \NN$, the same inequality holds for $n+1$. We have
\begin{eqnarray}
  \nonumber
  P( r_{n+1} > t ) & = & (1-\tau_n) P( r_n > t ) +  \tau_n P( r_n > t-1 ) \\
  \nonumber
  & \leq & (1-\tau_n) P( \bar{r}_n > t ) + \tau_n P( \bar{r}_n > t-1 ) \\
  \nonumber
  & \leq & (1-\sigma_n) P( \bar{r}_n > t ) + \sigma_n P( \bar{r}_n > t-1 ) \\
  & = & P( \bar{r}_{n+1} > t ),  \label{eq:66}
\end{eqnarray}
where we used the inductive hypothesis (\ref{eq:65}) and the inequality $P( \bar{r}_n > t )\leq P( \bar{r}_n > t-1 )$. Thus, by induction, the inequality (\ref{eq:65}) holds for all $n$. Using (\ref{eq:65}), we further obtain:
\begin{equation*}
	\EE[r_n] = \int_0^\infty P(r_n>t) dt \leq \int_0^\infty P(\bar{r}_n>t) dt = \EE[\bar{r}_n]
\end{equation*}
The proof is complete.
\end{proof}

\section{Appendix E} \label{app:appE}
\begin{proof}
	We can study the generalized Polya urn model in the slightly modified form:
\begin{equation} \label{eq:gen_Polya_urn_model}
	P(\bar{r}_{n+1}=k|\bar{r}_n) = \begin{cases}
    \frac{\bar{r}_n+1}{a_n},		& \text{if } k=\bar{r}_n+1 \\
    1-\frac{\bar{r}_n+1}{a_n},  & \text{if } k=\bar{r}_n
\end{cases}
\end{equation}

Taking the conditional expectation of $\bar{r}_{n+1}$ with respect to the filtration $\mathcal{F}_{n+1} \defequal \sigma(\bar{r}_1,\dots,\bar{r}_n,\gamma_1,\dots,\gamma_{n+1},\by_1,\dots,\by_{n+1})$, we get $\EE[\bar{r}_{n+1}|\mathcal{F}_{n+1}] = (\bar{r}_n+1) \left(1+\frac{1}{a_n} \right) - 1$.
Set $x_n := \bar{r}_n+1$. Rewriting this and using the definition of $a_n$, we obtain:
\begin{equation} \label{eq:D}
	\EE\left[ x_{n+1} \Big| \mathcal{F}_{n+1}\right] \leq x_n \left(1+\frac{l_{n+1}(\by_{n+1})}{n\log n}\right)
\end{equation}
Next, we seek an upper bound on the conditional expectation $\EE[l_k(\by_k)|\mathcal{F}_{k-1}]$. This quantity can be bounded using convex duality \cite{Seeger:2003}:
\begin{equation*}
	\EE[l_k(\by_k)|\mathcal{F}_{k-1}] \leq 1 + \frac{1}{s} D(p_T\parallel \tilde{L}_{k,K+}) + \frac{1}{s} \log \EE_{\tilde{L}_{k,K+}}[e^{s (l_k(\by_k) - 1)}] 
\end{equation*}
For $k\geq N$, $l_k(\by_k)\leq \zeta$ and $\EE_{\tilde{L}_{k,K+}}[l_k(\by_k)]=1$. By Hoeffding's inequality, $\EE_{\tilde{L}_{k,K+}}[e^{s (l_k(\by_k) - 1)}] \leq e^{s^2\zeta^2/8}$. Using this bound, we obtain for $k\geq N$, $\EE[l_k(\by_k)|\mathcal{F}_{k-1}] \leq 1 + \delta/s + s \zeta^2/8$. Minimizing this as a function of $s>0$, we obtain:
\begin{equation} \label{eq:ln_bound}
	\EE[l_k(\by_k)|\mathcal{F}_{k-1}] \leq 1 + \zeta \sqrt{\frac{\delta}{2}} 
\end{equation}

Next, we upper bound $\EE[x_{n+1}|\mathcal{F}_N]$ recursively. Taking the conditional expectation of both sides of (\ref{eq:D}), we obtain:
\begin{equation} \label{eq:B}
	\EE\left[ x_{n+1} \Big| \mathcal{F}_n\right] \leq \EE\left[ x_n \left(1+\frac{l_{n+1}(\by_{n+1})}{n\log n} \right) \Big| \mathcal{F}_n \right]
\end{equation}
We note that the function $l_{n+1}(\cdot)$ is $\mathcal{F}_n$-measurable. This follows since by definition, $l_{n+1}(\cdot)=\frac{L_0(\cdot)}{\sum_{h=1}^{k_n} \frac{m_n(h)}{n} L_{n+1,h}(\cdot) }$, and $m_n(h)=\sum_{l=1}^n I(\gamma_l=h)$ and $L_{n+1,h}(\cdot)$ are both $\mathcal{F}_n$-measurable (due to the parameter updates and (\ref{eq:Lih})). Also note that $x_n=\bar{r}_n+1$ is randomly determined by a biased coin flip given $\mathcal{F}_n$, increasing by $1$ with probability $\frac{x_{n-1}}{a_{n-1}}$ and staying the same with probability $1-\frac{x_{n-1}}{a_{n-1}}$. Since $a_{n-1}$ is $\mathcal{F}_{n}$-measurable, it follows that $x_n$ and $l_{n+1}(\by_{n+1})$ are conditionally independent given the history $\mathcal{F}_n$. Using this conditional independence, we obtain from (\ref{eq:B}):
\begin{equation}
	\EE\left[ x_{n+1} \Big| \mathcal{F}_n\right] \leq \EE[ x_n | \mathcal{F}_n] \left( 1+ \frac{\EE[l_{n+1}(\by_{n+1})|\mathcal{F}_n]}{n\log n} \right) \leq \EE[x_n|\mathcal{F}_n] \left(1+\frac{1+\zeta \sqrt{\delta/2}}{n \log n}\right) \label{eq:C}
\end{equation}
where we used the bound (\ref{eq:ln_bound}) in the last inequality.
Repeatedly conditioning and using (\ref{eq:D}) and (\ref{eq:C}): $\EE[x_{n+1}|\mathcal{F}_N] \leq \prod_{k=N}^n \left(1+\frac{1 + \zeta \sqrt{\frac{\delta}{2}}}{k\log k} \right) \EE[x_N|\mathcal{F}_N] \leq C_0 N \log^{1+\zeta\sqrt{\delta/2}} n$, where we used the Lemma in Appendix F and $C_0=C(1+\zeta\sqrt{\delta/2},N), x_N\leq N$ in the last inequality. Taking the unconditional expectation and using $\EE[r_n+1] \leq \EE[\bar{r}_n+1]$ (see Appendix D) yields the bound $\EE\left[ r_{n} + 1 \right] \leq C_0 N \log^{1+\zeta\sqrt{\delta/2}} n$. Markov's inequality then yields $\PP\left(\frac{r_n+1}{C_0 N \log^{1+\zeta\sqrt{\delta/2}} n} > K \right) \leq \frac{1}{K}$ which implies (\ref{eq:alpha_bnd}) by taking $K\to\infty$. Since $\alpha_n=\frac{r_n+1}{\lambda+\log n}$, the bound in (\ref{eq:alpha_bnd}) follows from a similar argument. The proof is complete.

\end{proof}

\section{Appendix F} \label{app:appF}

\begin{lem} \label{lemma:prod_ub}
	The following upper bound holds with constant $C(\phi,N)=e^{\frac{\phi}{N\log N}}/\log^\phi N$:
	\begin{equation*}
		\prod_{k=N}^n \left(1+\frac{\phi}{k\log k}\right) \leq C(\phi,N) \log^\phi n
	\end{equation*}
\end{lem}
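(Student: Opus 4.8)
The plan is to pass to logarithms, turning the product into a sum, and then control that sum by an integral comparison. Writing the left-hand side as $\exp\bigl(\sum_{k=N}^n \log(1+\tfrac{\phi}{k\log k})\bigr)$ and applying the elementary inequality $\log(1+x)\le x$ valid for $x\ge 0$, I would reduce the claim to the additive bound
\[
	\prod_{k=N}^n \left(1+\frac{\phi}{k\log k}\right) \le \exp\left(\phi \sum_{k=N}^n \frac{1}{k\log k}\right),
\]
so that everything hinges on showing $\sum_{k=N}^n \frac{1}{k\log k} \le \frac{1}{N\log N} + \log\log n - \log\log N$.

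For the sum I would use that $f(x)=\frac{1}{x\log x}$ is positive and strictly decreasing on $[N,\infty)$ for, say, $N\ge 2$, since $f'(x) = -\frac{1+\log x}{(x\log x)^2}<0$. Monotone decrease yields the standard comparison $\sum_{k=N+1}^n f(k)\le \int_N^n f(x)\,dx$, and isolating the first term gives $\sum_{k=N}^n f(k) \le f(N) + \int_N^n f(x)\,dx$. The integral is elementary: the substitution $u=\log x$ gives $\int_N^n \frac{dx}{x\log x} = \log\log n - \log\log N$, while $f(N)=\frac{1}{N\log N}$ supplies the boundary term, establishing the displayed bound on the sum.

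Combining the two steps and exponentiating, I would invoke the identities $e^{\phi\log\log n}=\log^\phi n$ and $e^{-\phi\log\log N}=\log^{-\phi}N$ to rewrite the resulting upper bound as $e^{\phi/(N\log N)}\,\log^{-\phi}N\cdot\log^\phi n$, which is exactly $C(\phi,N)\log^\phi n$ with the constant $C(\phi,N)=e^{\phi/(N\log N)}/\log^\phi N$ stated in the lemma.

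The computation is essentially routine, so there is no genuine obstacle; the only point demanding care is matching the constant \emph{exactly}. Specifically, one must keep the isolated boundary term $f(N)$ so that it contributes precisely the factor $e^{\phi/(N\log N)}$, and take the lower limit of the integral at $N$ rather than $N-1$ so that the two $\log\log N$ contributions assemble into exactly $\log^{-\phi}N$. It is also worth confirming at the outset that $N$ is large enough (any $N\ge 2$ suffices) that $\log N>0$ and $f$ is decreasing over the entire summation range.
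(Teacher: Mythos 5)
Your argument is correct and follows essentially the same route as the paper: take logarithms, apply $\log(1+x)\le x$, bound the resulting sum by isolating the $k=N$ term and comparing the rest with $\int_N^n \frac{dx}{x\log x}=\log\log n-\log\log N$, then exponentiate to recover the constant $C(\phi,N)=e^{\phi/(N\log N)}/\log^\phi N$ exactly. No gaps.
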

\begin{proof}
	Using the elementary inequality $\log(1+x)\leq x$ for $x>-1$, we obtain:
	\begin{align*}
		&\log\left( \prod_{k=N}^n \left(1+\frac{\phi}{k\log k} \right) \right) = \sum_{k=N}^n \log\left( 1+\frac{\phi}{k \log k} \right) \\
			&\leq \sum_{k=N}^n \frac{\phi}{k \log k} \leq \phi \left( \int_{N}^n \frac{dx}{x \log x} + \frac{1}{N \log N} \right) \\
			&= \phi \left( \int_{\log N}^{\log n} \frac{dt}{t} + \frac{1}{N \log N} \right) \\
			&= \log\left( \frac{\log^\phi n}{\log^\phi N} \right) + \frac{\phi}{N \log N}
	\end{align*}
	Taking the exponential of both sides yields the desired inequality.	
\end{proof}

\bibliographystyle{amsplain}
\bibliography{refs}

\end{document}